\newtheorem{remark}{Remark}
\newtheorem{assumption}{Assumption}
\newtheorem{theorem}{Theorem}
\newtheorem{lemma}{Lemma}
\patchcmd{\@addpunct}{:}{\space}{}{}
\renewenvironment{proof}[1][\proofname]{\par
  \pushQED{\qed}%
  \normalfont \topsep6\p@\@plus6\p@\relax
  \trivlist
  \item[\hskip\labelsep\itshape #1.\@addpunct{\space}]\ignorespaces
}{%
  \popQED\endtrivlist\@endpefalse
}
\title{\LARGE \bf
Exploiting Adjacent Similarity in Multi-Armed Bandit Tasks via Transfer of Reward Samples
}
\author{NR Rahul$^{1}$ and Vaibhav Katewa$^{2}$
\thanks{NR Rahul is with the Department of Electrical  Communication Engineering (ECE) at the Indian Institute of Science, Bengaluru, India. Email: {\tt\small rahulnr@iisc.ac.in}.}
\thanks{Vaibhav Katewa is with the Robert Bosch Center for Cyber-Physical Systems and the Department of ECE at the Indian Institute of Science, Bengaluru, India. Email:{\tt\small vkatewa@iisc.ac.in}}%
}
\begin{document}

\maketitle
\thispagestyle{empty}
\pagestyle{empty}

\begin{abstract}

We consider a sequential multi-task problem, where each task is modeled as the stochastic multi-armed bandit with $K$ arms. We assume the bandit tasks are adjacently similar in the sense that the difference between the mean rewards of the arms for any two consecutive tasks is bounded by a parameter. We propose two algorithms (one assumes the parameter is known while the other does not) based on UCB to transfer reward samples from preceding tasks to improve the overall regret across all tasks. Our analysis shows that transferring samples reduces the regret as compared to the case of no transfer. We provide empirical results for our algorithms, which show performance improvement over the standard UCB algorithm without transfer and a naive transfer algorithm.

\end{abstract}

\section{INTRODUCTION}

In sequential multi-task settings, an agent encounters a sequence of tasks to be solved. The agent can transfer information from previously solved tasks to new and similar tasks to help improve performance of the new task\cite{pan2009survey}\cite{zhuang2020comprehensive}. In the context of multi-armed bandits \cite{bubeck2012regret}\cite{lattimore2020bandit}, the information from one bandit is used to make decisions in another similar bandit task\cite{bouneffouf2020survey}. This is particularly useful in scenarios such as the user cold start problem \cite{silva2022multi} in recommender systems, where good initial recommendations are made by using information gathered from similar users. Similarly, in reinforcement learning, the learned policies/model from one task is used in another similar task to help speed up learning\cite{taylor2009transfer}\cite{zhu2023transfer}. Reusing information also helps to address the problem of data efficiency in reinforcement learning.

In this paper, we consider a sequential multi-task setting, where the agent interacts with each task sequentially, one after the other. Each task is modeled by a stochastic multi-armed bandit problem, where the agent interacts by pulling one of the arms at any given time and, in return, gets a random reward. We assume the tasks are adjacently similar and introduce a parameter $\epsilon$ in Section \ref{prob_sta} to capture this similarity between tasks. The parameter $\epsilon$ captures many interesting scenarios like the similarity between different users based on region, age, gender, etc, and the changing user preferences in recommender systems, changing market trends in online advertising, etc. The goal is to use the information from the previously solved tasks in order to improve the performance in the current task, therefore leading to overall performance improvement. This is achieved by reusing/transferring reward samples from previously encountered tasks to the current task. Our algorithm is inspired by \cite{10590903}, which is based on Upper Confidence Bound (UCB) algorithm \cite{auer2002finite}.

\textbf{Related Work.} 
Several works in transfer learning for bandits have focused on linear\cite{cella2020meta,cella2023multi} or contextual bandits \cite{cai2024transfer,deshmukh2017multi}, where an explicit form of the reward function is assumed. This makes the problem simpler and allows the derivation of mathematical results. In contrast, we consider transfer learning in the most generic case of stochastic multi-armed bandits, where no such assumption of the reward function is made. In [6], the authors study sequential transfer in stochastic multi-armed bandits. However, they consider a fixed number of MAB tasks. In contrast, we study sequential transfer in stochastic multi-armed bandits for infinite tasks. The authors in \cite{soare2014multi} have extended this framework to infinite linear bandits tasks that are close in $l_2$ distance. In our previous paper on transfer in MAB \cite{10590903}, we considered the notion of universal similarity, where all tasks are similar. In contrast, we study adjacent similarity in this paper.
Therefore, we transfer reward samples from the preceding task and not from all previous tasks. Additionally, the number of reward samples to transfer is controlled through a novel parameter, which effectively transfers more samples if the tasks are close and fewer samples if the tasks are not. On the other hand, in \cite{10590903}, all previous reward samples are transferred. Thus, this paper generalizes the setting of \cite{10590903} in a non-trivial manner.

Another similar set of problems are non-stationary bandits \cite{garivier2011upper,liu2018change}, which are different from our setting in the sense that the task-switching instants are unknown. Although, in our setting, the task-switching instants are known, we provide transfer algorithms to improve the performance over the no-transfer UCB-based algorithm (NT-UCB). Note that the algorithms in the literature of non-stationary bandits use NT-UCB with known switching task instants as the oracle algorithm. Therefore, we believe our approach of transferring reward samples to non-stationary bandits will achieve better performance (which we defer to future work). 



\textbf{Main Contributions.} The main contributions of the paper are:
\begin{enumerate}
\item We propose Tr-UCB algorithm to transfer information using the reward samples from the preceeding task to the current task in a sequential multi-task bandit setting. We extend Tr-UCB to Tr-UCB2 algorithm to handle the case of unknown parameter $\epsilon$.
\item We provide the regret analysis for Tr-UCB and Tr-UCB2 and show that there is no negative transfer. Our regret upper bound clearly captures the performance improvement due to transfer.
\item We provide empirical evaluation of the algorithms Tr-UCB and Tr-UCB2 and show effectiveness of transferring information from previous tasks.
\end{enumerate}

\textbf{Notations:} $\mathds{1}\{E\}$ denotes the indicator function whose value is $1$ if the event (condition) $E$ is true, and $0$ otherwise. Similarly, for $n$ events $E_1$, $E_2$, $\cdots$, $E_n$, where $n\in\mathbb{N}$, we define $\mathds{1}\{E_1, E_2, \cdots, E_n\}$ as the indicator function whose value is $1$ if all the events are true, and $0$ otherwise. Further, let $\emptyset$ denote the null set, and let $[l]$ denote the set $\{1, 2, \cdots, l\}$ for some $l\in \mathbb{N}$.

\section{Preliminaries and Problem Statement}

\label{prob_sta}

We consider a sequential multi-task problem, where each task is modeled as a stochastic multi-armed bandit with $K$ arms. Let $J$ denote the total number of tasks, and $n_j$ denote the task length/total steps in task $j$. Further, let the total number of steps in the $J$ tasks be denoted by $T$ and is given by $T = \sum\limits_{j=1}^{J}n_j$. In task $j$, at each time step $t$ (denotes the number of steps from the beginning of the task $j$), the agent makes a decision denoted by $I_t^j\in[K]$ to pull one of the $K$ arms, and in turn, receives a random reward $r_{I_t^j}\in[0,1]$. Let $\mathbf{r}_{t}^{j} = \{r_{I_{1}^j}, r_{I_{2}^j},..., r_{I_{t}^j}\}$ denote the corresponding rewards received from steps $1$ to $t$ in task $j$. The reward samples are independent across time and across arms, and their probability distributions are unknown. Let $\mu_{k}^{j}$ be the mean reward of arm $k$ in task $j$. We define $k^{j}_{*}$ and $\mu^{j}_{*}$ to be an optimal arm in task $j$ and its mean reward, respectively, and are given by,
\begin{align*}
k^{j}_{*} \in \mathcal{A}^j = \underset{k\in [K]}{\arg \max} \{\mu_{k}^{j}\}\hspace{8pt} \text{and} \hspace{8pt} \mu^{j}_{*} = \max\limits_{k\in [K]} \{\mu_{k}^{j}\}.
\end{align*}
Define $\Delta_{k}^{j} = \mu_{*}^{j}-\mu_{k}^{j}>0$ as the sub-optimality gap of arm $k \notin \mathcal{A}^j$ in task $j$. In our setting, the agent encounters a sequence of multi-armed bandit tasks (refer to Figure \ref{fig:SeqMultiTaskSett}). We assume the tasks are adjacently similar in the sense that the mean rewards of consecutive tasks do not change considerably. The following assumption captures the similarity between any two consecutive tasks.

\begin{assumption}
\label{assump1}
 We assume that $|\mu^{j}_k - \mu^{j+1}_k| \leq \epsilon_k $ for all $j\in [J-1]$, and the parameter $\epsilon_k \in[0,1), \forall k\in[K]$.
\end{assumption}
This assumption implies that for each arm $k\in[K]$, the mean rewards between any two consecutive tasks do not differ by more than $\epsilon_k$. One application where this assumption is relevant is online advertising and recommender systems, wherein user preferences do not change drastically over time. Note that for any given task, Assumption \ref{assump1} can be leveraged to have a better inference about the optimal mean reward (and its corresponding optimal arm) of that particular task by using additional reward samples from the previous similar tasks. 

The goal of the agent in the sequential multi-armed bandit setting in any given task $j$ and time $t$ is to make decisions $I_t^j$ based on the reward samples $\{\{\mathbf{r}_{n_l}^{l}\}_{l=1}^{j-1}, \mathbf{r}_{t-1}^{j}\}$ to maximize the expected total reward over all the bandit tasks. This is captured in terms of the total pseudo-regret as
\begin{align} \label{eq_regret}
R_J = \sum\limits_{j=1}^{J} R_{n_j} = \sum\limits_{j=1}^{J}\left[n_j\mu_{*}^{j}-\mathbb{E}\left[\sum\limits_{t=1}^{n_j}\mu^j_{I_t^j}\right]\right].
\end{align}
Equivalently, the goal is to make decisions $\{I_t^j: 1 \leq t\leq n_j ,\forall j\in[J]\}$ to minimize the regret in \eqref{eq_regret}. 

\begin{figure}[h]
\centering
\includegraphics[width = 1.1\columnwidth]{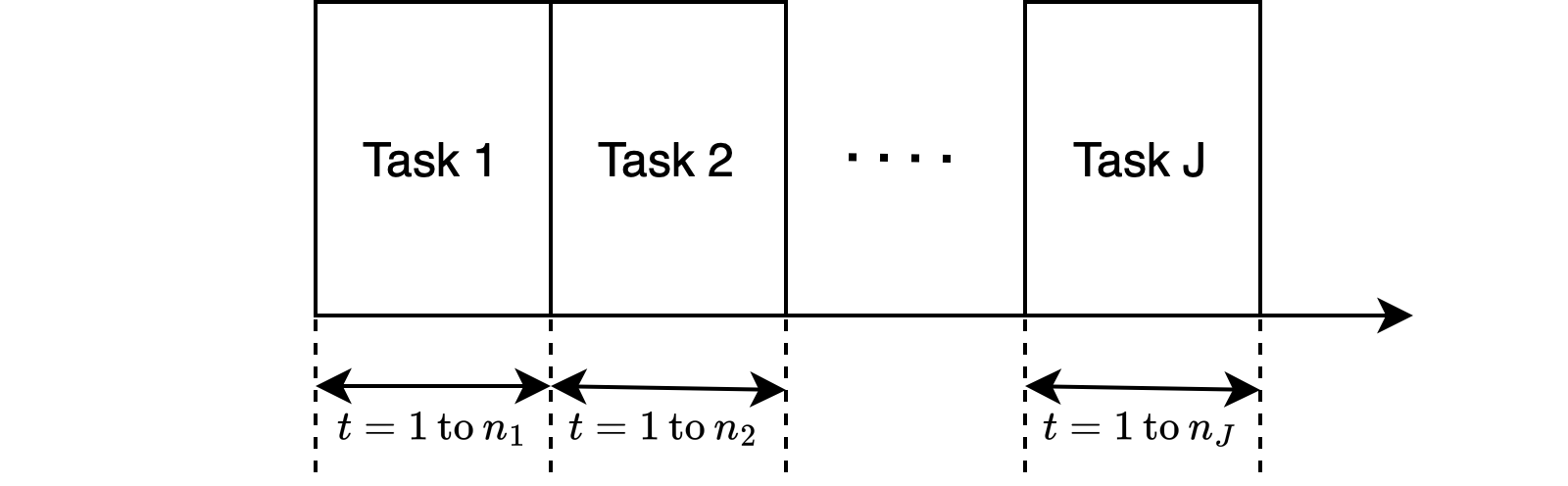}
\caption{Sequential multi-task bandit setting}
\label{fig:SeqMultiTaskSett}
\end{figure}
In this paper, we leverage the relation between the mean rewards of any two consecutive tasks (c.f. Assumption \ref{assump1}) in order to minimize the regret $R_J$. This is accomplished by reusing/transferring reward samples from previous tasks to make decisions in the current task. In the next section, we present two algorithms - one assumes that the parameter $\epsilon_k$ in Assumption \ref{assump1} is known, and the other for the case when $\epsilon_k$ is unknown. We provide the regret analysis of these algorithms and compare them with the baseline algorithm, which works without reusing/transferring reward samples from previous tasks.

\section{Algorithms and Regret Analysis}
\label{algo_RegAna}
Our algorithms are based on Upper Confidence Bound algorithm (UCB) \cite{auer2002finite} for bandits. UCB is a popular strategy for balancing exploration and exploitation by selecting arms based on the upper confidence bounds of their estimated mean rewards. A simple extension of UCB to the sequential bandit setting is to use UCB separately for each task. In particular, this approach uses reward samples only from the current task to compute the upper confidence bounds of the estimated mean rewards. We call this approach as No Transfer-UCB (NT-UCB) algorithm. However, NT-UCB does not leverage the relation between any two consecutive tasks (c.f. Assumption \ref{assump1}). In contrast, we leverage Assumption \ref{assump1} by transferring reward samples from the previous task to the current task. We call this algorithm Transfer-UCB (Tr-UCB). Next, we discuss these algorithms in detail.

\subsection{No Transfer-UCB (NT-UCB)}

The NT-UCB algorithm is a straightforward extension of the UCB algorithm \cite{auer2002finite} to the sequential multi-armed bandit task setting. It involves resetting the mean reward estimates and, therefore, the upper confidence bounds at the beginning of a new task. Essentially, when transitioning to a new task, NT-UCB discards reward samples from previous tasks and recomputes the confidence bounds based on the rewards obtained in the current task. In any task, the NT-UCB algorithm (shown in Algorithm \ref{alg:NT-UCB}) begins by pulling each arm once. From $t\geq K+1$, the algorithm computes the sample average estimates of mean reward denoted by,   
{\begin{align}
\label{eq:estimate1}
\hat{\mu}_{1k}^{j}(t) = \:\:\frac{\sum\limits_{\tau = 1}^{t}r_{I_\tau}^{j}\mathds{1}\{I_\tau = k\}}{N_k^j(t)},
\end{align}}
\noindent where $N_k^j(t)$ denotes the number of times arm $k$ is pulled until time $t$ in the task $j$.
Let $q_{1k}^j(t)$ denote its corresponding confidence width which is computed as follows,
{\begin{align}
\label{eq:confWidth1}
q_{1k}^j(t) = \sqrt{\frac{\alpha\log{t}}{2N_k^{j}(t)}},
\end{align}}
where $\alpha>2$. Next, the NT-UCB algorithm uses $\hat{\mu}_{1k}^{j}(t-1)+q_{1k}^j(t-1)$ to make decision $I_t^j$ at time $t$ in task $j$ based on the ``optimism in the face of uncertainty principle",
{\begin{align}
I_t^j = \underset{k\in [K]}{\arg \max}\left\{\hat{\mu}_{1k}^{j}(t-1)+q_{1k}^{j}(t-1)\right\}.
\end{align}}
 
Next, we provide the regret upper bound of the NT-UCB algorithm. Let $\Delta_k^{\max} = \max\limits_{j\geq 1} \: \{\Delta_k^j\} $ and $\Delta_k^{\min} = \min\limits_{j\geq 1,\Delta_{k}^{j}> 0} \: \{\Delta_k^j\}$ denote the universal (over all tasks) upper and lower bound on the sub-optimality gap of arm $k$. Let $\alpha>2$ be a positive integer. Then we have the following bound on the regret. 
\begin{lemma}
\label{lemma:ucbRegret}
The total pseudo-regret of NT-UCB satisfies 
{\begin{align}
\label{eq:ucbRegret}
R_J &\leq \sum\limits_{k=1}^{K}\left[\sum\limits_{{\substack{j=1 \\ \Delta_k^j > 0}}}^{J}\frac{2\alpha \log{n_j}}{\Delta_k^j}+\frac{\alpha}{\alpha-2}\sum\limits_{j=1}^{J} \Delta_k^j\right].
\end{align}}
\end{lemma}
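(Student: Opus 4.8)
The plan is to exploit the fact that NT-UCB treats each task in isolation, so the total regret decomposes additively across tasks and the per-task regret reduces to the classical UCB analysis of \cite{auer2002finite}. First I would write $R_J = \sum_{j=1}^{J} R_{n_j}$ and, within each task, re-express the per-task pseudo-regret in terms of the expected number of pulls of each sub-optimal arm: since every pull of an arm $k \notin \mathcal{A}^j$ incurs an instantaneous regret $\Delta_k^j$, we have
\begin{align*}
R_{n_j} = \sum_{k:\,\Delta_k^j > 0} \Delta_k^j\, \mathbb{E}\!\left[N_k^j(n_j)\right].
\end{align*}
Thus it suffices to bound $\mathbb{E}[N_k^j(n_j)]$ for each sub-optimal arm in each task, and then sum over $k$ and $j$.

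Next I would run the standard optimism argument on a single task $j$. Fix a sub-optimal arm $k$ and set the threshold $u_k^j = \lceil 2\alpha \log n_j / (\Delta_k^j)^2 \rceil$. The key observation is that once $N_k^j(t-1) \geq u_k^j$, the confidence width in \eqref{eq:confWidth1} satisfies $q_{1k}^j(t-1) \leq \Delta_k^j/2$, so arm $k$ can be selected at time $t$ only if at least one confidence bound is violated, namely either the optimal arm's index falls below $\mu_*^j$, or arm $k$'s empirical mean exceeds $\mu_k^j$ by more than its confidence width. Writing $N_k^j(n_j) \leq u_k^j + \sum_{t=u_k^j+1}^{n_j} \mathds{1}\{I_t^j = k,\, N_k^j(t-1)\geq u_k^j\}$ and taking expectations reduces the problem to bounding the probabilities of these two violation events.

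These probabilities I would control with Hoeffding's inequality, which applies because the rewards lie in $[0,1]$ and are independent across time and arms. A union bound over the (at most $t$) possible values of the pull count shows that each violation event at time $t$ has probability of order $t^{1-\alpha}$. Summing over $t$ yields a convergent series precisely because $\alpha > 2$, and bounding it gives the additive constant $\alpha/(\alpha-2)$, so that $\mathbb{E}[N_k^j(n_j)] \leq 2\alpha \log n_j/(\Delta_k^j)^2 + \alpha/(\alpha-2)$. Multiplying by $\Delta_k^j$, then summing over sub-optimal arms $k$ and over tasks $j$, produces exactly \eqref{eq:ucbRegret}.

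The main obstacle is the bookkeeping in the tail sum: one must handle the union bound over the number of pulls carefully so that the exponent of $t$ comes out to $1-\alpha$ and the constant consolidates to $\alpha/(\alpha-2)$ rather than a looser value. A secondary subtlety is ensuring the logarithmic factor is the task-local horizon $\log n_j$ rather than the running time $\log t$; this follows from bounding $\log t \leq \log n_j$ for all $t \leq n_j$ when instantiating the threshold $u_k^j$, which is what lets the within-task confidence level in \eqref{eq:confWidth1} be replaced by $\log n_j$ in the final bound.
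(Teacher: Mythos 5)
Your proposal is correct and is essentially the paper's own argument: the paper simply invokes the standard UCB regret bound of \cite{bubeck2012regret} applied independently to each task, which is exactly the per-task optimism analysis you spell out (threshold $\lceil 2\alpha\log n_j/(\Delta_k^j)^2\rceil$, Hoeffding plus a union bound over pull counts giving $t^{1-\alpha}$ tails, and the ceiling's $+1$ combining with the tail sum $2/(\alpha-2)$ to give $\alpha/(\alpha-2)$). No gaps; your treatment of the $\log n_j$ versus $\log t$ point and the constant consolidation matches the cited analysis.
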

\begin{proof}
Follows from the regret upper bound of standard UCB algorithm \cite{bubeck2012regret}.
\end{proof}
\begin{remark}
When the total number of tasks satisfies $J = \mathcal{O}(T^{\beta})$, where $\beta\in[0,1)$, the regret in Lemma \ref{lemma:ucbRegret} becomes $R_T = \mathcal{O}\bigg(\left(\sum\limits_{k=1}^{K}\frac{1}{\Delta_k^{\min}}\right)T^\beta\log(T)\bigg) +  \mathcal{O}\bigg(\left(\sum\limits_{k=1}^{K}\Delta_k^{\max}\right)T^\beta\bigg)=  \mathcal{O}(T^\beta\log(T))$. 
\end{remark}

\begin{algorithm}
\caption{NT-UCB}\label{alg:NT-UCB}
\begin{algorithmic}[1]
\Require Total tasks $J$, parameter $\alpha$, and number of arms $K$
\For {task $j = 1,2,...,J$}
    \For {$t = 1,\cdots,K$}
        \State $I_t^j = t$ (Pull each arm once)
    \EndFor
    \For {$t = K+1,\cdots,n_j$}
        \State compute $\hat{\mu}_{1k}^j(t-1)$ using \eqref{eq:estimate1}, $\forall k\in[K]$
        \State compute $q_{1k}^j(t-1)$ using \eqref{eq:confWidth1}, $\forall k\in[K]$
        \State select arm $I_t^j = \underset{k\in [K]}{\arg \max} \{\hat{\mu}_{1k}^j(t-1)+q_{1k}^j(t-1)\}$
        \State update  number of pulls $N_k^j(t)$, $\forall k\in[K]$
    \EndFor
\EndFor
\end{algorithmic}
\end{algorithm}

\subsection{Transfer-UCB (Tr-UCB) with parameter $\epsilon_k$ known}
The NT-UCB algorithm uses reward samples from the current task to make decisions $I_t$. However, by Assumption \ref{assump1}, the mean rewards of consecutive tasks are similar. Hence, samples from previous tasks contain information about the mean reward of the current task. To leverage this information, we construct an auxiliary estimate using the reward samples from the preceding task. Subsequently, the UCB and auxiliary estimates are combined to make decisions $I_t$. Next, we describe this Tr-UCB algorithm in detail (shown in Algorithm \ref{alg:Tr_UCB}).

Let $\hat{\mu}_{2k}^{j}(t)$ denote the auxiliary estimate of the mean reward of arm $k$ at time $t$ in task $j$. The auxiliary estimate $\hat{\mu}_{2k}^{j}(t)$ is computed using the reward samples from the current task $j$ and the preceding task $j-1$. Further, let $B_k = \frac{\eta-4\epsilon_k^2}{4\epsilon_k^2}$ denote the maximum number of transferred samples for arm $k$ from the preceding task, where $\eta>8$. The term $B_k$ is large when the parameter $\epsilon_k$ is small, which means a large number of reward samples from the preceding task are allowed to be transferred; on the other hand, fewer reward samples are transferred when $\epsilon_k$ is large. By limiting the number of transferred reward samples through $B_k$, the amount of bias introduced in the estimation of mean in the current task is not excessive and remains sufficient to facilitate the transfer. Then, the auxiliary estimate $\hat{\mu}_{2k}^{j}(t)$ is computed as:
{\begin{align}
\label{eq:auxEst}
\hat{\mu}_{2k}^{j}(t) = \:\frac{R_k^j+\sum\limits_{\tau = 1}^{t}r_{I_\tau}^{j}\mathds{1}\{I_\tau = k\}}{N_k^{j}(t)+M_k^j},
\end{align}}
where $R_k^{j} = \sum\limits_{\tau = 1}^{n_{j-1}}r_{I_\tau}^{j}\mathds{1}\{I_\tau = k; N_k^{j-1}(\tau)\leq B_k\}$ denotes the sum of transferred rewards, and $M_k^{j} = \min\{N_k^{j-1}(n_{j-1}), B_k\}$ denotes the number of transferred reward samples for the arm $k$. Note that for $\epsilon_k = 0$, we have $M_k^{j} = N_k^{j-1}(n_{j-1})$, which means all the reward samples from the preceding task are transferred. Further, observe that the reward samples from the previous tasks other than the preceding task also carry information about the mean reward of the current task and, therefore, can be used to compute the auxiliary estimate. This is particularly useful if the length of the preceding task is small, which means there are fewer reward samples to transfer, even though the term $B_k$ is very large. However, to keep the algorithm and the analysis simple, we have considered transferring reward samples only from the preceding task\footnote{The extension of the proposed algorithm to the case of transferring reward samples from multiple previous tasks is similar, and we defer it to future work.}. Next, we compute the confidence width denoted by $q_{2k}^j(t)$ of the auxiliary estimate $\hat{\mu}_{2k}^{j}(t)$ as follows:
{\begin{align}
\label{eq:confWidth_aux}
q_{2k}^j(t) = \sqrt{\frac{\eta\log{(B_k+t)}}{2\left(N_{k}^j(t)+M_k^j\right)}}.
\end{align}}
Finally, the Tr-UCB algorithm makes decision $I_t^j$ at time $t$ by combining the upper confidence bounds of the mean reward computed using the sample average estimate $\hat{\mu}_{1k}^{j}(t)$ and the auxiliary estimate $\hat{\mu}_{2k}^{j}(t)$ as follows:
\begin{align}
\label{eq:trUCB_deci}
I_t^j =& \underset{k\in [K]}{\arg \max}\{\min \{\hat{\mu}_{1k}^j(t-1)+q_{1k}^j(t-1),\hat{\mu}_{2k}^{j}(t-1)+\nonumber\\
&\hspace{2cm}q_{2k}^j(t-1)\}\}.
\end{align}
Taking the minimum of the upper confidence bounds of the two estimates gives a conservative upper bound on the true value of the mean reward, which allows the algorithm to be less optimistic. This leads to increased exploitation and decreased exploration, therefore leading to a reduction in regret.

Next, we provide the regret result of Tr-UCB. For simplicity, we introduce the following notations:
\begin{align}
\label{eq_u1_u2}
u_{1k}^{j} \triangleq \frac{2\alpha\log{(n_j)}}{(\Delta_k^{j})^2} \:\:\text{and}\:\: 
u_{2k}^{j} \triangleq \frac{2\eta\log{(B_k+n_j)}}{(\Delta_k^{j})^2}.
\end{align}
\begin{theorem}
\label{theorem1_trucb}
Let $u_{1k}^{j}$ and $u_{2k}^{j}$ be defined as in \eqref{eq_u1_u2}. The pseudo-regret of Tr-UCB satisfies 
{\begin{align}
\label{eq:trucbRegret}
R_J &\leq \sum\limits_{k=1}^{K}\Delta_k^{\max}\Bigg( \bigg(\sum\limits_{l=0}^{\lceil\frac{J-2}{2}\rceil}\min\left\{U_{k}^l,V_{k}^l\right\}\bigg)+W_k^J+ \nonumber\\
&\hspace{2.5cm}J\bigg(\frac{\alpha}{\alpha-2}+\frac{8}{\eta-8}\bigg)\Bigg),
\end{align}}
where,
{\begin{align*}
U_{k}^l = u_{1k}^{2l+1}\mathds{1}\{\Delta_{k}^{2l+1}> 0\}+u_{1k}^{2l+2}\mathds{1}\{\Delta_{k}^{2l+2}> 0\},
\end{align*}}
\vspace{-0.5cm}
{\begin{align*}
V_{k}^l&=\Bigg(u_{2k}^{2l+1}\mathds{1}\{\Delta_{k}^{2l+1}> 0\}+u_{2k}^{2l+2}\mathds{1}\{\Delta_{k}^{2l+2}> 0\}\Bigg)-\\
&\hspace{1cm}\min\Bigg\{\max\bigg\{u_{2k}^{2l+1},u_{2k}^{2l+2}\bigg\},B_k\Bigg\}
\end{align*}}
\vspace{-0.5cm}
{\begin{align*}
\text{and}\:\:W_{k}^J= \mathds{1}\{J \: \text{is odd},\Delta_k^J>0\}\Bigg(\min\bigg\{u_{1k}^J, u_{2k}^J\bigg\}\Bigg).
\end{align*}}
\end{theorem}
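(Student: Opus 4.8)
The plan is to follow the standard UCB regret decomposition, bounding the expected number of plays of each suboptimal arm in each task, but adapted to two features of Tr-UCB: the decision rule \eqref{eq:trUCB_deci} takes the \emph{minimum} of two confidence bounds, and the auxiliary index is built from biased transferred samples. First I would bound each task's regret by $\Delta_k^{\max}$ times the plays, writing
\begin{align*}
R_J \le \sum_{k=1}^{K}\Delta_k^{\max}\sum_{\substack{j=1\\ \Delta_k^j>0}}^{J}\mathbb{E}\!\left[N_k^j(n_j)\right],
\end{align*}
and then bound $\mathbb{E}[N_k^j(n_j)]$ for a fixed suboptimal arm $k$. Because $I_t^j=k$ forces the smaller of the two indices of $k$ to exceed that of an optimal arm, the play event implies both that the UCB1 index of $k$ is large and that its UCB2 index is large. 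Each implication yields a separate ceiling on $N_k^j$, so $\mathbb{E}[N_k^j(n_j)]$ is at most the \emph{minimum} of the two ceilings, plus the total mass of the two families of concentration-failure events.

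The UCB1 ceiling reproduces the NT-UCB computation behind Lemma \ref{lemma:ucbRegret} and equals $u_{1k}^j$, with failure mass $\alpha/(\alpha-2)$. The heart of the proof is the UCB2 ceiling. I would split $\hat\mu_{2k}^j-\mu_k^j$ into a zero-mean average of $N_k^j+M_k^j$ independent bounded rewards, controlled by Hoeffding, and a deterministic bias of magnitude $\tfrac{M_k^j}{N_k^j+M_k^j}|\mu_k^{j-1}-\mu_k^j|\le\tfrac{M_k^j}{N_k^j+M_k^j}\epsilon_k$ by Assumption \ref{assump1}. The definition $B_k=(\eta-4\epsilon_k^2)/(4\epsilon_k^2)$ together with the cap $M_k^j\le B_k$ is precisely what forces this bias to stay within the confidence width $q_{2k}^j$: the bias can absorb up to half of $q_{2k}^j$, which effectively quarters the confidence exponent and is exactly why $\eta>8$ is required and why the residual failure mass sums to $8/(\eta-8)$. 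With optimism restored, inverting $q_{2k}^j$ shows that $k$ is played only while $N_k^j(t)+M_k^j\le u_{2k}^j$, giving the UCB2 ceiling $(u_{2k}^j-M_k^j)^+$. Hence
\begin{align*}
\mathbb{E}\!\left[N_k^j(n_j)\right]\le \min\!\left\{u_{1k}^j,\,(u_{2k}^j-M_k^j)^+\right\}+\frac{\alpha}{\alpha-2}+\frac{8}{\eta-8}.
\end{align*}

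It remains to sum over $j$. The transfer count $M_k^j=\min\{N_k^{j-1}(n_{j-1}),B_k\}$ couples each task to its predecessor, so I would group the tasks into consecutive pairs $(2l+1,2l+2)$ and bound the pair total $N_k^{2l+1}+N_k^{2l+2}$, arranging the algebra so that the random count $N_k^{2l+1}$ cancels against the saving $M_k^{2l+2}$ it produces; substituting the ceiling on $N_k^{2l+1}$ then collapses the pair to the closed form $V_k^l$, with $U_k^l$ the corresponding pairwise UCB1 bound and the outer minimum reflecting the min-index rule. The leftover single task when $J$ is odd is treated directly and yields $W_k^J$, the summation limit $\lceil(J-2)/2\rceil$ together with the indicators $\mathds{1}\{\cdot>0\}$ encode the parity of $J$ and the arms that are optimal in a given task, and the $J(\alpha/(\alpha-2)+8/(\eta-8))$ term collects the per-task failure masses.

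The main obstacle is the UCB2 step: certifying that the transferred, mean-shifted samples leave the auxiliary index optimistic, since this is simultaneously what forces the $\eta>8$ requirement and what produces the saving $M_k^j$ that drives the entire improvement over Lemma \ref{lemma:ucbRegret}. The secondary difficulty is the inter-task coupling through $M_k^j$, for which the pairing argument is the decoupling device that still retains the transfer saving in closed form.
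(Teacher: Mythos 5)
Your proposal is correct and follows essentially the same route as the paper's proof: the same decomposition $R_J\le\sum_k\Delta_k^{\max}\mathbb{E}[D_k(J)]$, the same observation that the min-index rule forces both the UCB1 and UCB2 ceilings to bind (yielding the minimum of the two plus the concentration-failure masses $\alpha/(\alpha-2)$ and $8/(\eta-8)$), the same bias-versus-half-width argument that fixes $B_k$ and the $\eta>8$ requirement, and the same pairing of tasks $(2l+1,2l+2)$ in which the pulls of task $2l+1$ cancel against the transfer count $M_k^{2l+2}$ to produce $V_k^l$, with $W_k^J$ handling the leftover odd task. No substantive differences from the paper's argument.
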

\begin{proof}
Refer to the Appendix.
\end{proof}
\begin{remark}
Observe that when the total number of tasks $J = \mathcal{O}(T^{\beta})$, where $\beta\in[0,1)$, the regret in \eqref{eq:trucbRegret} follows $R_T = \mathcal{O}(T^\beta\log(T))$, which is the same as for NT-UCB. Therefore, Tr-UCB ensures there is no negative transfer irrespective of the value of $\epsilon_k$. In other words, Tr-UCB guarantees performance is not negatively affected when the tasks are dissimilar, i.e. when the value of $\epsilon_k$ is high. 
\end{remark}
Although the algorithms have the same order of growth with respect to $T$, we show the benefit of transfer by comparing the actual regret expressions.

\textbf{Benefit of Transfer}. We show the benefit of transfer by comparing the expressions of regret in \eqref{eq:trucbRegret} and \eqref{eq:ucbRegret}. The first term in the regret bound captures the benefit of transfer, and therefore we compare the first terms of \eqref{eq:trucbRegret} and \eqref{eq:ucbRegret}, respectively. Define the following terms for the tasks $2l+1$ and $2l+2$, for some $l\in \{0,1,\cdots,\lceil \frac{J-2}{2}\rceil\}$
\begin{align*}
A_k^l \triangleq \Delta_k^{\max}U_k^l,\hspace{1cm}
E_k^l \triangleq \Delta_k^{\max}V_k^l,
\end{align*}
\begin{align*}
F_k^l \triangleq u_{1k}^{2l+1}\Delta_{k}^{2l+1}+u_{1k}^{2l+2}\Delta_{k}^{2l+2}
\end{align*}
Further, we rewrite the regret upper bound of NT-UCB in \eqref{eq:ucbRegret} using $F_k^l $ as follows,
{\begin{align}
R_J &\leq \sum\limits_{k=1}^{K}\bigg(\bigg(\sum\limits_{{\substack{l=0 \\ \Delta_k^{2l+1} > 0, \Delta_k^{2l+2} > 0}}}^{\lceil\frac{J-2}{2}\rceil}F_{k}^l\bigg)+\frac{\alpha}{\alpha-2}\sum\limits_{j=1}^{J} \Delta_k^j\bigg),
\end{align}}

We analyze the benefit of transfer for the consecutive tasks $2l+1$ and $2l+2$. Note that for the transfer to be useful for the tasks $2l+1$ and $2l+2$, we need $\min\{A_k^l,E_k^l\}<F_k^l$. Since $A_k^l\geq F_k^l$, this can happen only if $E_k^l<F_k^l$. Observe that when $B_k$ is very small, the term $E_k^l$ is relatively large, and as $B_k$ increases, the term $E_k^l$ decreases in comparison with $F_k^l$. Hence, for some large enough $B_k$, we get $E_k^l<F_k^l$, which leads to a decrease in the regret upper bound of Tr-UCB. Recall that $B_k$ depends on the parameter $\epsilon_k$ through an inverse relationship, i.e. as $\epsilon_k$ decreases $B_k$ increases. Therefore, the regret upper bound of Tr-UCB decreases when compared to NT-UCB, when the parameter $\epsilon_k$ decreases.

\begin{algorithm}
\caption{Tr-UCB - Parameter $\epsilon_k$ is known}\label{alg:Tr_UCB}
\begin{algorithmic}[1]
\Require Total tasks $J$, number of arms $K$, and parameters $\alpha$,$\eta$, $\epsilon_k,\forall k\in[K]$ 
\For {task $j = 1,2,...,J$}
    \State repeat steps 2 to 4 of Algorithm \ref{alg:NT-UCB}
    \For {$t = K+1,\cdots,n_j$}
        \State compute $\hat{\mu}_{1k}^j(t-1)$ using \eqref{eq:estimate1}, $\forall k\in[K]$
        \State compute $q_{1k}^j(t-1)$ using \eqref{eq:confWidth1}, $\forall k\in[K]$
        \State compute $\hat{\mu}_{2k}(t-1)$ using \eqref{eq:auxEst},$\forall k\in [K]$
        \State compute $q_{2k}^j(t-1)$ using \eqref{eq:confWidth_aux}, $\forall k\in [K]$
        \State select arm $I_t^j$ using \eqref{eq:trUCB_deci} at time $t$
        \State update  number of pulls $N_k^j(t)$, $\forall k \in [K]$
    \EndFor
\EndFor
\end{algorithmic}
\end{algorithm}

\subsection{Transfer-UCB (Tr-UCB2) with parameter $\epsilon_k$ unknown}
The Tr-UCB algorithm discussed previously assumes that the parameter values $\epsilon_k$ are known. However, access to these parameters may not be available in practice. One approach to address this problem is to estimate the value of $\epsilon_k$ using the past reward samples and then follow the Tr-UCB algorithm using the estimated value. We employ the same approach but with a minor modification to the Tr-UCB algorithm. To increase the confidence in the estimates of $\epsilon_k$, we pull the arms uniformly for a fixed number of steps. This increases the number of reward samples for each arm, and therefore, increases the confidence in the estimates of $\epsilon_k$. We call this approach Tr-UCB2. Next, we describe Tr-UCB2 algorithm in detail (refer to Algorithm \ref{alg:Tr_UCB2} for pseudo-code).
\begin{figure}[h]
\centering
\includegraphics[width = 1 \columnwidth]{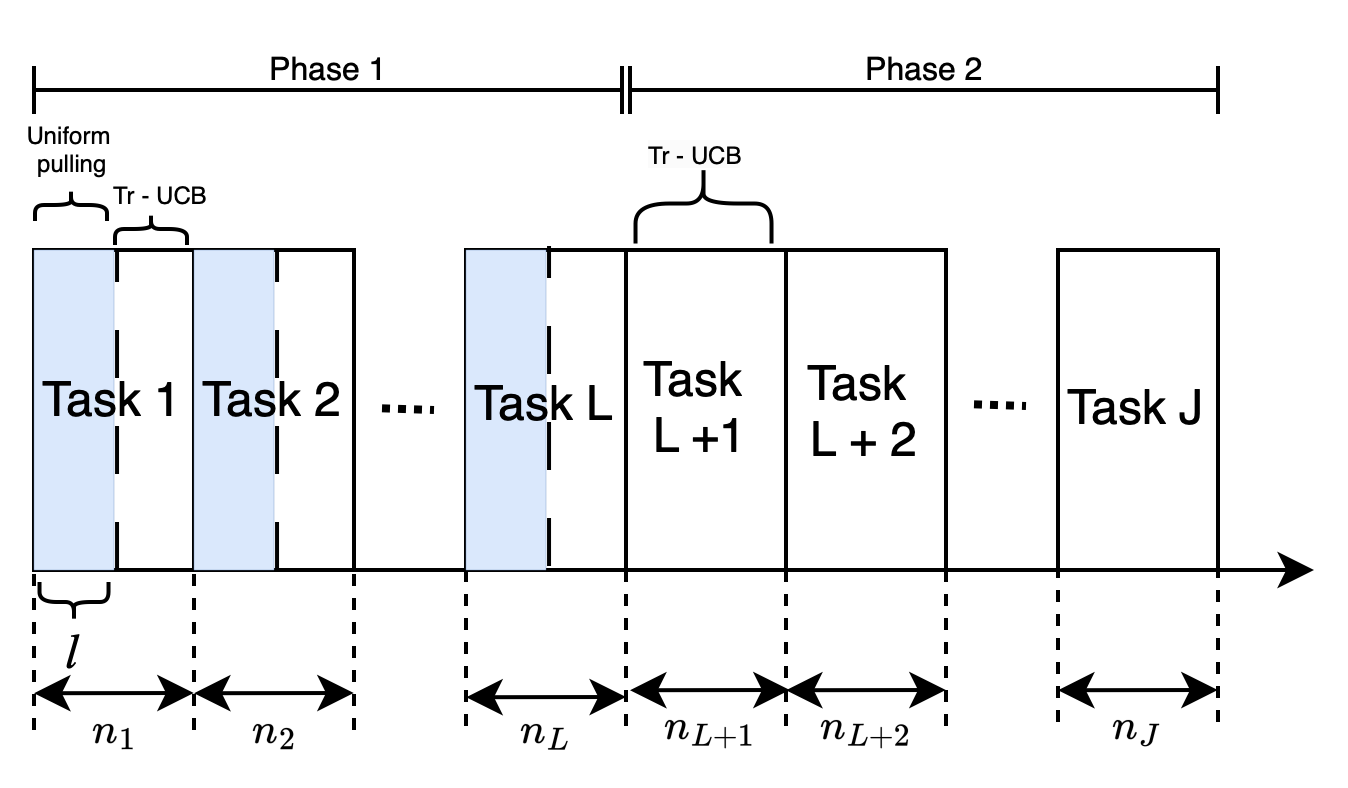}
\caption{Pictorial representation of Tr-UCB2}
\label{fig:TR-UCB2}
\end{figure}

As depicted in Figure \ref{fig:TR-UCB2}, the algorithm's behavior is divided into two phases. Let $L\geq 2$ denote the number of tasks in phase I. For each task in phase I, the Tr-UCB2 algorithm begins by pulling the arms uniformly for steps $1\leq t\leq l $, where $l = aK$, for some $a\in \mathbb{N}$. This ensures enough reward samples from each arm are generated for estimating $\epsilon_k$ with good confidence. For $t>l$, the algorithm uses an estimate of $\epsilon_k$ (described below) and follows the Tr-UCB strategy. The estimate of $\epsilon_k$ is computed at the beginning of every task using the reward samples from the previous tasks. After $L$ tasks, the algorithm enters Phase II which is similar to Phase I except that there is no uniform sampling in the tasks of Phase II. Similar to Phase I, an estimate of $\epsilon_k$ is computed at the beginning of every task in Phase II and Tr-UCB strategy is used. Algorithm \ref{alg:Tr_UCB2} mentions all steps of TR-UCB2.


Next, we explain the computation of the estimate of $\epsilon_k$ in detail. Let $\hat{\epsilon}_k^j$ denote an estimate of the parameter $\epsilon_k$ for arm $k$ in task $j$. Then, $\hat{\epsilon}_k^j$ is computed as follows:  
\begin{align}
\label{eq:epsilon_hat}
 \hat{\epsilon}_k^j \hspace{-0.1cm}= \hspace{-0.1cm}\begin{cases}
1 & \hspace{-0.5cm}\mbox{ \text{,}\:$1\leq j\leq 2 $}\\
\hspace{-0.1cm}\max\limits_{i\in[j-1], c_k^i\leq c_0}\hspace{-0.2cm}\{ |\hat{\mu}_{1k}^{i+1}(n_{i+1})-\hat{\mu}_{1k}^{i}(n_{i})\pm c_k^i|\} & \hspace{-0.5cm}\mbox{ \text{,}\:$ 2 < j\leq J $}\\
\end{cases}
\end{align}
\vspace{-0.1cm}
where 
\begin{align*}
c_k^i = \sqrt{\frac{N_k^{i+1}(n_{i+1})+N_k^{i}(n_{i})}{2N_k^{i+1}(n_{i+1})N_k^{i}(n_{i})}\log{\left(\frac{2}{\delta}\right)}},\:\:\text{and}\:\:\delta \in (0,1) ,
\end{align*}
\vspace{-0.1cm}
\begin{align*}
c_0 = \sqrt{\frac{K}{l}\log\left(\frac{2}{\delta}\right)}.
\end{align*}
We outline the motivation for the expression of the estimate $\hat{\epsilon}_k^j$ in the following three steps:

(i) From Assumption \ref{assump1}, we know that $|\mu^{i}_k - \mu^{i+1}_k|\leq \epsilon_k$. Hence, we compute a high probability upper bound on the true value of $\epsilon_k$ using the difference between the empirical averages of the mean rewards, i.e., $\hat{\mu}_{1k}^{i+1}(n_{i+1})-\hat{\mu}_{1k}^{i}(n_{i})$. By using Hoeffding's inequality \cite{hoeffding1994probability}, it can be shown that with probability at least $1-\delta$, the true difference of the mean rewards $\mu^{i}_k - \mu^{i+1}_k$ lies in the interval $[\hat{\mu}_{1k}^{i+1}(n_{i+1})-\hat{\mu}_{1k}^{i}(n_{i})-c_k^i, \hat{\mu}_{1k}^{i+1}(n_{i+1})-\hat{\mu}_{1k}^{i}(n_{i})+c_k^i]$. 

(ii) By considering $|\hat{\mu}_{1k}^{i+1}(n_{i+1})-\hat{\mu}_{1k}^{i}(n_{i})\pm c_k^i|$, we are essentially taking the maximum possible value of the difference in the mean rewards from the confidence interval $[\hat{\mu}_{1k}^{i+1}(n_{i+1})-\hat{\mu}_{1k}^{i}(n_{i})-c_k^i, \hat{\mu}_{1k}^{i+1}(n_{i+1})-\hat{\mu}_{1k}^{i}(n_{i})+c_k^i]$. This method is pessimistic (biased towards higher values of $\hat{\epsilon}_k$) in estimating the true value of $\epsilon_k$, thus helping in minimizing the negative transfer, especially when the estimates are not sufficiently accurate.

(iii) Since the parameter $\epsilon_k$ upper bounds the difference in the mean rewards, we take the maximum value among all the estimates of the difference in the mean rewards. Note that the constraint $c_k^i\leq c_0$ ensures the estimates of the difference in mean rewards $\hat{\mu}_{1k}^{i+1}(n_{i+1})-\hat{\mu}_{1k}^{i}(n_{i})$ have good confidence. This is because, the constraint $c^i_k \leq c_0$ ensures that $c^i_k$ is never too large, in other words, the estimates computed using fewer samples are not considered. Therefore the constraint ensures the estimates have high confidence. 

In the following theorem, we provide the regret analysis of Tr-UCB2,

\begin{theorem}
\label{theorem2_trucb}
The pseudo-regret of Tr-UCB2 satisfies 
{\begin{align}
\label{eq:trucbRegret2}
R_J &\leq \sum\limits_{k=1}^{K}\Delta_k^{\max}\bigg(\frac{lL}{K}+\sum\limits_{j=1}^{J}u_{1k}^j+ J\bigg(\frac{\alpha}{\alpha-2}+\frac{8}{\eta-8}\bigg)\nonumber\\
&\hspace{2.5cm}+TJ\delta\bigg).
\end{align}}
Furthermore, if the total number of tasks satisfies $J = \mathcal{O}(T^{\beta})$ and confidence parameter satisfies $\delta = \frac{1}{T}$, then the regret follows $R_T = \mathcal{O}(T^\beta\log(T))$.
\end{theorem}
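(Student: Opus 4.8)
The plan is to mirror the proof of Theorem~\ref{theorem1_trucb} while isolating the two new sources of regret particular to Tr-UCB2: the uniform exploration used to form the estimates $\hat{\epsilon}_k^j$, and the estimation error of those $\hat{\epsilon}_k^j$. As in the NT-UCB and Tr-UCB analyses, I would start from the pull-count decomposition
\begin{align*}
R_J = \sum_{k=1}^{K}\sum_{j=1}^{J}\Delta_k^j\,\mathbb{E}[N_k^j(n_j)] \leq \sum_{k=1}^{K}\Delta_k^{\max}\sum_{j=1}^{J}\mathbb{E}[N_k^j(n_j)],
\end{align*}
and then bound $\mathbb{E}[N_k^j(n_j)]$ task by task. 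First I would peel off the uniform-sampling steps: in each of the $L$ Phase~I tasks the algorithm deterministically pulls every arm $l/K$ times, contributing exactly $lL/K$ pulls of arm $k$ over the phase and hence the leading $\Delta_k^{\max}\,lL/K$ term, whereas Phase~II contributes no such term. These extra samples only shrink the subsequent UCB-phase pull counts, so they can be charged separately and otherwise ignored.

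Next I would introduce the good event $G_k$ on which every Hoeffding confidence statement used to build the estimates holds, i.e. for all consecutive pairs $(i,i+1)$ with $c_k^i\le c_0$ the true gap $\mu_k^{i}-\mu_k^{i+1}$ lies in $[\,\hat{\mu}_{1k}^{i+1}(n_{i+1})-\hat{\mu}_{1k}^{i}(n_i)-c_k^i,\ \hat{\mu}_{1k}^{i+1}(n_{i+1})-\hat{\mu}_{1k}^{i}(n_i)+c_k^i\,]$. Since $c_k^i$ is exactly the two-sided Hoeffding width for a difference of empirical means with failure probability $\delta$, and there are at most $J-1$ distinct such pairs per arm, a union bound gives $\mathbb{P}(G_k^c)\le J\delta$. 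On $G_k^c$ I would bound the entire contribution of arm $k$ crudely by the horizon, $\sum_j N_k^j(n_j)\le T$, producing the $\Delta_k^{\max}\,TJ\delta$ term after multiplying by $\mathbb{P}(G_k^c)$ and summing over $k$.

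On $G_k$ I would argue that the estimate controls the transfer bias exactly as the true $\epsilon_k$ does in Theorem~\ref{theorem1_trucb}: by construction $\hat{\epsilon}_k^j$ is the pessimistic quantity $\max_i(|\hat{\mu}_{1k}^{i+1}(n_{i+1})-\hat{\mu}_{1k}^{i}(n_i)|+c_k^i)$, which together with Assumption~\ref{assump1} yields a high-probability upper bound on the relevant mean-gap, so the induced cap keeps the bias of the auxiliary estimate $\hat{\mu}_{2k}^{j}$ within its confidence width $q_{2k}^j$. Then the Tr-UCB pull-counting argument applies: because the index in \eqref{eq:trUCB_deci} is the \emph{minimum} of the sample-average and auxiliary upper bounds, a suboptimal pull forces either that arm $k$'s sample-average index is inflated, which happens for at most $u_{1k}^j$ rounds, or that an optimal arm's index falls below its true mean. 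Summing the failure probabilities of the sample-average bound over time yields the $\alpha/(\alpha-2)$ constant per task, and summing those of the (now valid) auxiliary bound yields the $8/(\eta-8)$ constant per task, so that on $G_k$ the Tr-UCB part contributes at most $\sum_j u_{1k}^j+J\big(\alpha/(\alpha-2)+8/(\eta-8)\big)$.

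Collecting the uniform-sampling, good-event, and bad-event contributions gives \eqref{eq:trucbRegret2}. The asymptotic claim then follows by substitution: with $\delta=1/T$ the bad-event term is $TJ\delta=J=\mathcal{O}(T^\beta)$, the constant and uniform-sampling terms are $\mathcal{O}(J)=\mathcal{O}(T^\beta)$, and $\sum_j u_{1k}^j=\mathcal{O}(J\log T)=\mathcal{O}(T^\beta\log T)$ dominates, giving $R_T=\mathcal{O}(T^\beta\log T)$. I expect the main obstacle to be the good-event step: showing that an estimate built only from \emph{past} task pairs legitimately bounds the bias of transferring into the \emph{current} task, which requires carefully combining the pessimistic ($\max$-and-$c_k^i$) construction with Assumption~\ref{assump1} and verifying that the auxiliary confidence bound $q_{2k}^j$ remains valid once the estimated cap replaces $B_k$.
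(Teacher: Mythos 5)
Your proposal is correct and follows essentially the same route as the paper: peel off the $lL/K$ uniform-sampling pulls, relax the Tr-UCB bound $\min\{U_k^l,\hat V_k^l\}\le U_k^l$ to obtain $\sum_{j}u_{1k}^j$, and charge the horizon $T$ against the probability (at most $J\delta$ per arm, by a union bound over the Hoeffding intervals defining $\hat\epsilon_k^j$) that the estimate fails, which yields the $TJ\delta$ term and then the stated asymptotics with $\delta=1/T$. The only difference is bookkeeping --- the paper folds $\Pr\{\hat\epsilon_k^j<\epsilon_k\}\le j\delta$ into the per-time-step failure probability in \eqref{eq:probBound} rather than conditioning on a single global good event --- and, like the paper, you flag (without fully resolving) the delicate point that the pessimistic estimate built from past task pairs must dominate the bias of the current transfer.
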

\begin{proof}
Refer to the Appendix.
\end{proof}
Next, we analyze the regret upper bound \ref{eq:trucbRegret2} of Tr-UCB2 algorithm. The first term in \eqref{eq:trucbRegret2} captures the increased regret due to the uniform pulling of arms in phase I. This means that increasing the parameters $l$ and $L$, increases the regret. However, to estimate $\epsilon_k$ with high confidence, the parameters $l$ and $L$ need to be chosen sufficiently large. Hence, there is a tradeoff in estimating $\epsilon_k$ with high confidence and reducing the regret contributed by uniformly pulling the arms. The second term of \eqref{eq:trucbRegret2} does not explicitly capture the transfer benefit, unlike the regret upper bound of Tr-UCB in \eqref{eq:trucbRegret}. However, Tr-UCB2 is atleast as good as NT-UCB since both follow the same regret order and the empirical results in section \ref{simulation} further demonstrate that Tr-UCB2 performs better than NT-UCB. Nevertheless, this result is somewhat loose and does not fully capture the transfer benefit, which we aim to improve in future work. The last term captures the increased regret due to the error in the estimate of $\epsilon_k$. By decreasing $\delta$, the regret due to the last term decreases. However, the parameter $l$ needs to be increased simultaneously in order to increase confidence in the estimate of $\epsilon_k$, showing a trade-off.  

\begin{algorithm}
\caption{Tr-UCB2 - Parameter $\epsilon_k$ is unknown}\label{alg:Tr_UCB2}
\begin{algorithmic}[1]
\Require Parameters $\alpha$, $l$, $L$, $K$ and $J$
\For {task $j = 1,2,...,J$}
    \State compute estimate $\hat{\epsilon}_k^j,\forall k\in[K]$
    \If{$j\leq L$} 
        \For {$t = 1,\cdots,l$}
            \State $I_t = t\bmod K$ 
        \EndFor 
        \For {$t = l+1,\cdots,n_j$}
        \State repeat steps 4 to 9 of Algorithm \ref{alg:Tr_UCB} using $\hat{\epsilon}_k^j$
        \EndFor
    \Else
        \State repeat steps 2 to 10 of Algorithm \ref{alg:Tr_UCB} using $\hat{\epsilon}_k^j$
    \EndIf 
\EndFor
\end{algorithmic}
\end{algorithm}

\section{Numerical Simulations}
\label{simulation}
In this section, we present the simulation results showing the empirical improvement of algorithms Tr-UCB, Tr-UCB2 over NT-UCB and a naive transfer algorithm (called Naive-Transfer), indicating the benefit of transfer. Naive-Transfer is an empirical algorithm that transfers all reward samples from the preceding task, assuming that the samples come from the same distribution. We consider a sequence of tasks, where each task is a multi-armed bandit with $K = 5$ arms. The total number of tasks $J = 1000$, with task length $n_j = 10000,\forall j\in [J]$. The mean rewards $\{\mu_k^1\}_{k=1}^K$ of the first task are generated by uniformly sampling from the $[0,1]$ interval. The mean rewards of the subsequent tasks have to satisfy Assumption \ref{assump1}. Towards this end, we construct uniform distributions of mean $\mu_k^1$ and width $2\epsilon_k$ for each arm $k$. If the support of any distribution lies outside $[0, 1]$ interval, then we appropriately adjust the width. Then, the mean rewards $\mu_k^2$ of task 2 are uniformly sampled from these distributions. The mean rewards of the subsequent tasks are generated from the mean rewards of the preceding task in a similar manner. The rewards for arm $k$ in any task $j$ are generated by sampling from uniform distributions of mean $\mu_k^j$ with width $d = 0.1$. Once again, the widths are adjusted if they fall outside the $[0, 1]$ interval.
\begin{figure}[thpb]
\begin{subfigure}{\columnwidth}
    \centering
    \includegraphics[width=\columnwidth]{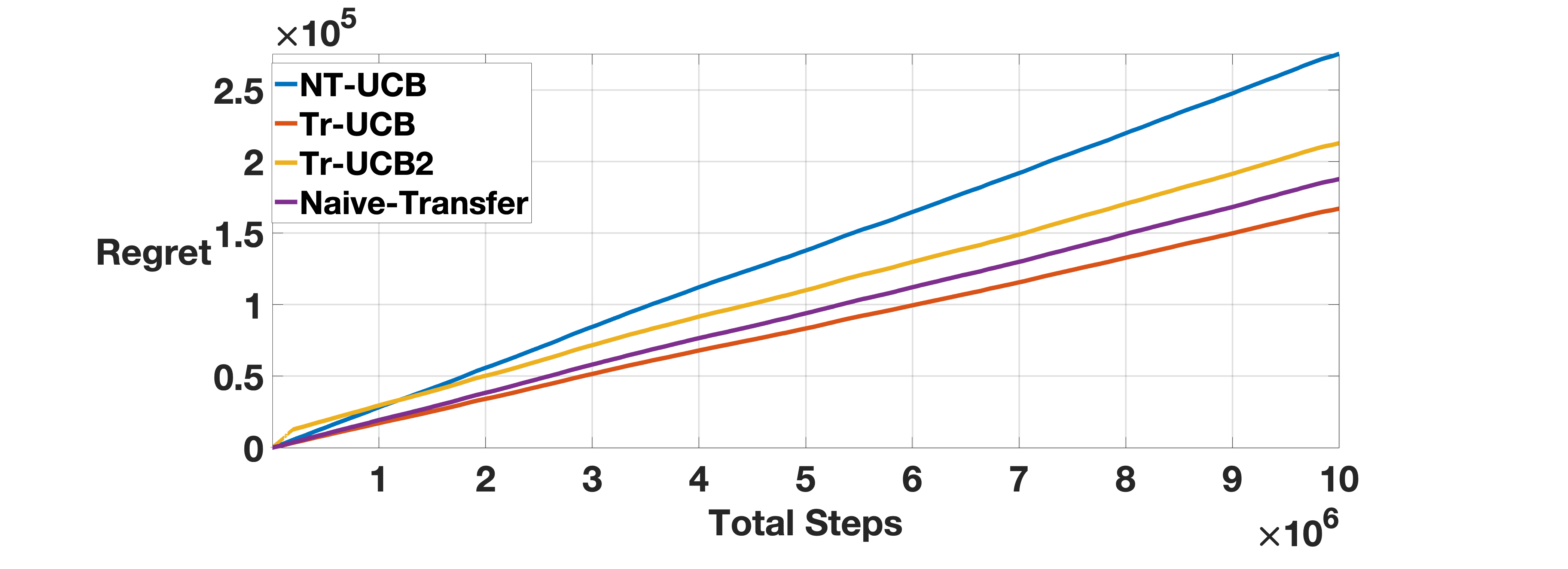}
    \caption{$\epsilon_k = 0.05$}
    \label{fig:1}
    \end{subfigure}
\begin{subfigure}{\columnwidth}
    \centering
    \includegraphics[width=\columnwidth]{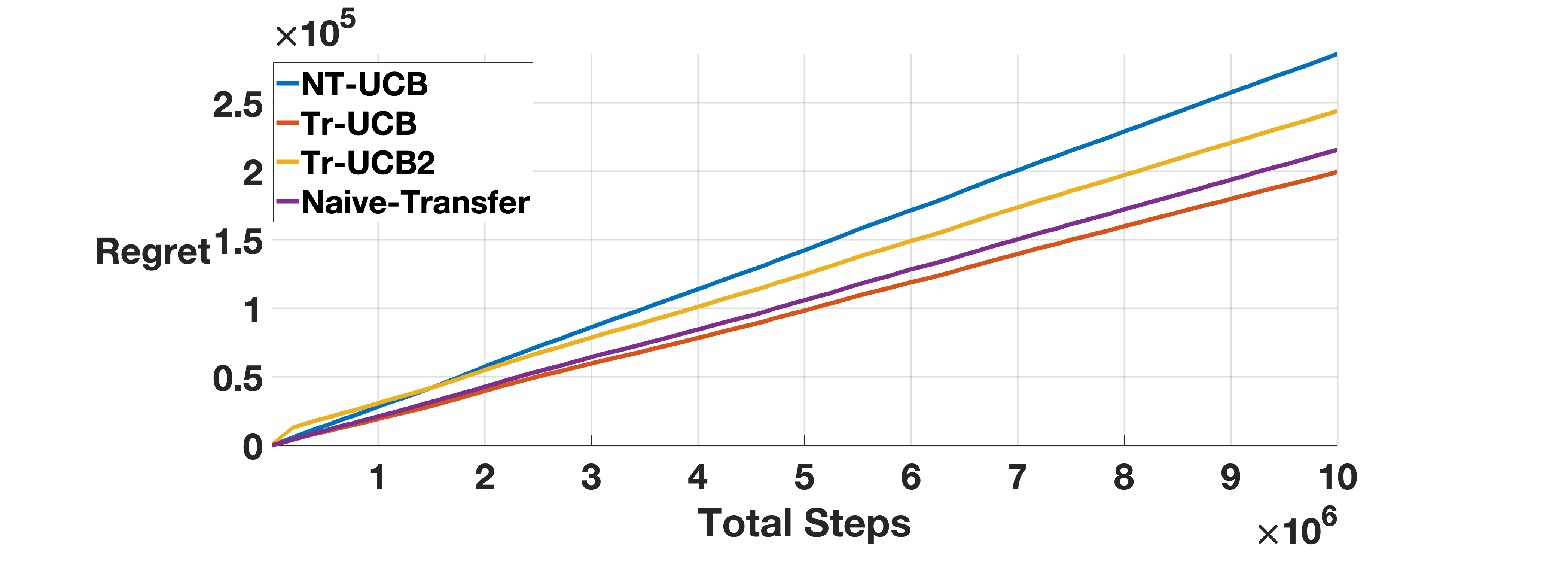}
    \caption{$\epsilon_k = 0.1$}
    \label{fig:2}
\end{subfigure}
\begin{subfigure}{\columnwidth}
    \centering
    \includegraphics[width=\columnwidth]{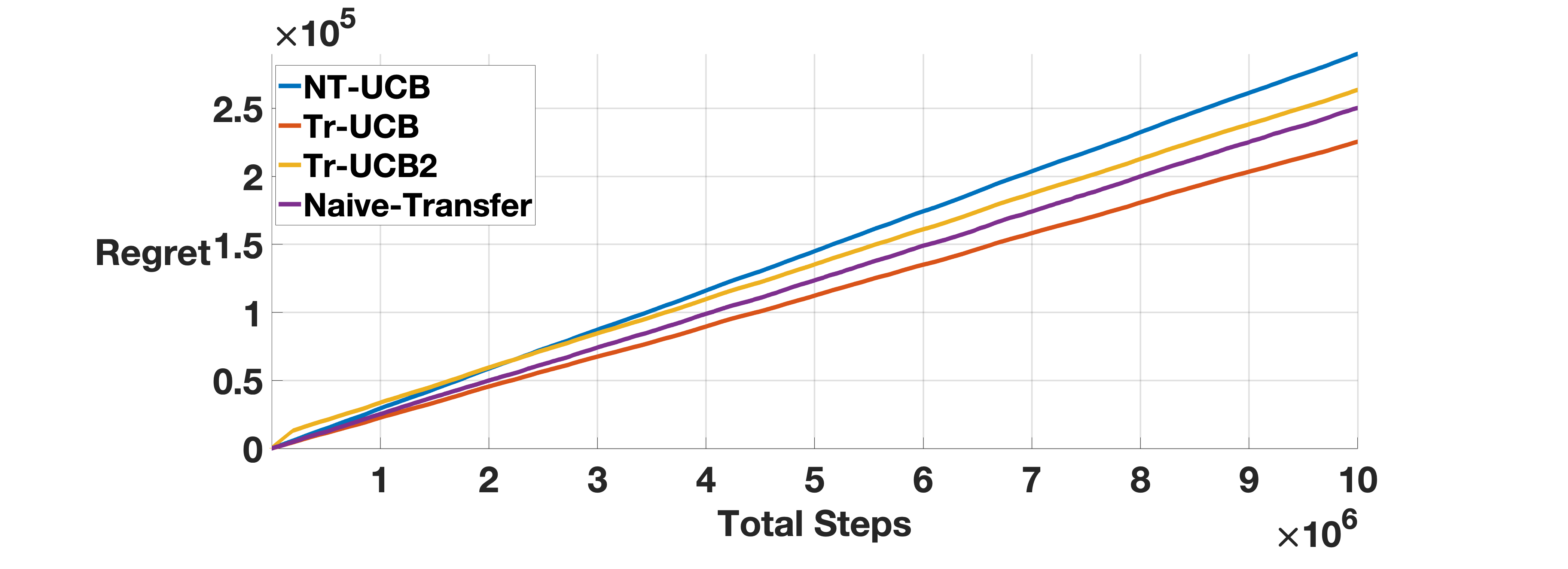}
    \caption{$\epsilon_k = 0.15$}
    \label{fig:3}
\end{subfigure}
\caption{Empirical Regret Vs Total Steps of  NT-UCB, Tr-UCB, Tr-UCB2 and Naive-Transfer algorithms for different values of $\epsilon_k$}
        \label{fig:all_1}
\end{figure}

Figure \ref{fig:all_1} and \ref{fig:all_2} shows the empirical regret over the total steps of algorithms NT-UCB, Tr-UCB, Tr-UCB2, and Naive-Transfer for different values of $\epsilon_k$. We have considered $\epsilon_k=\epsilon, \forall k\in[K]$. The plots are generated by averaging over $20$ realizations. The parameter values used in Tr-UCB2 algorithm are $l =2000$, $L = 20$, $\delta = 0.1$ and $\alpha=\eta=8.1$. The same value of $\alpha$ is used for NT-UCB algorithm. 
\begin{figure}[thpb]
\begin{subfigure}{\columnwidth}
    \centering
    \includegraphics[width=\columnwidth]{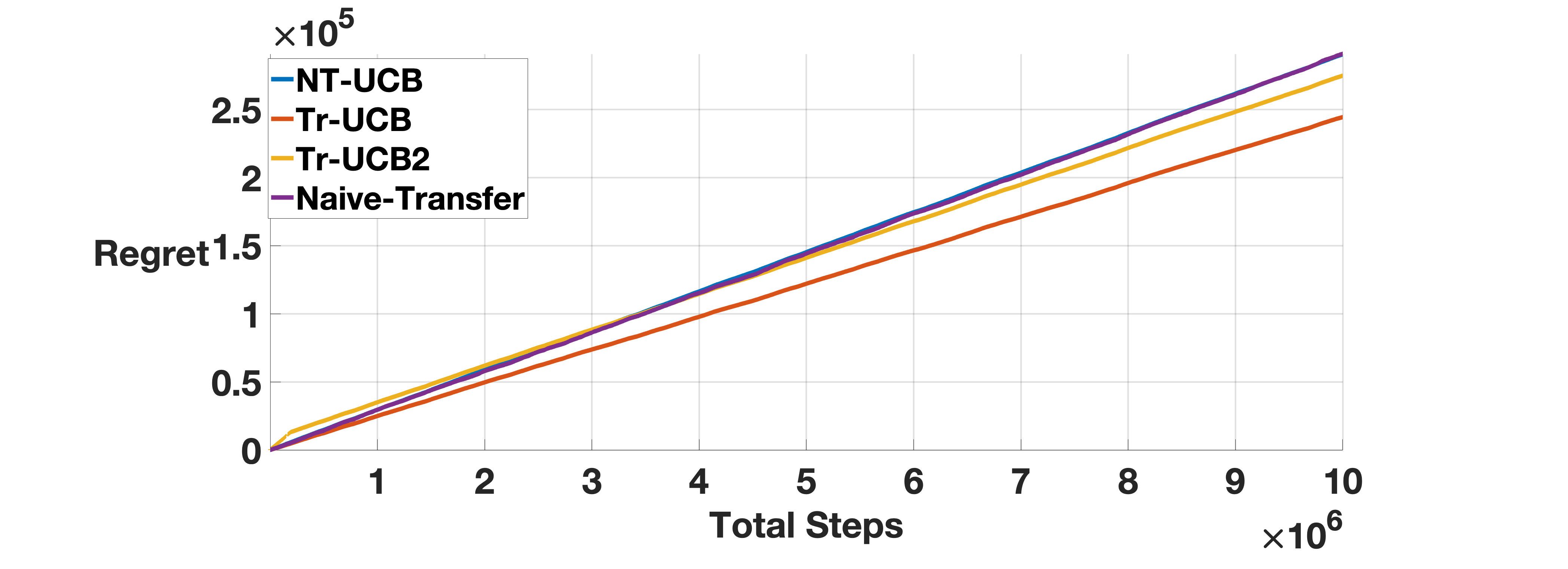}
    \caption{$\epsilon_k = 0.2$}
    \label{fig:4}
    \end{subfigure}
\begin{subfigure}{\columnwidth}
    \centering
    \includegraphics[width=\columnwidth]{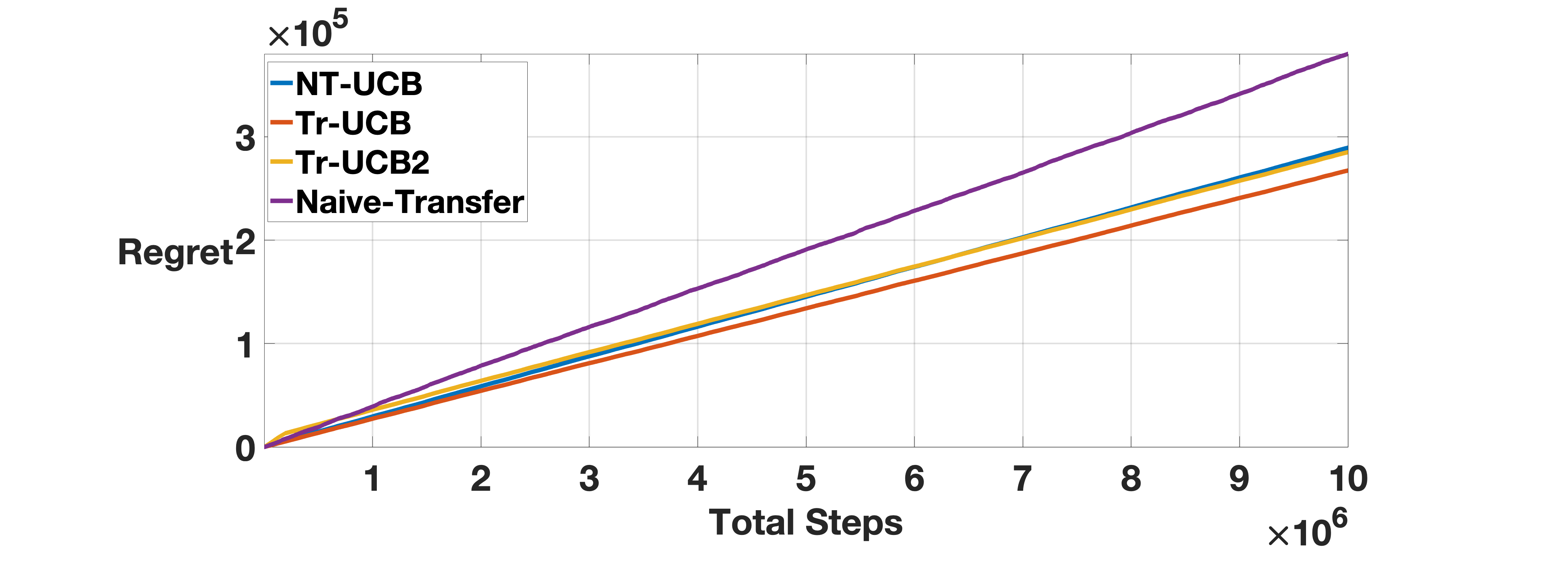}
    \caption{$\epsilon_k = 0.3$}
    \label{fig:5}
\end{subfigure}
\begin{subfigure}{\columnwidth}
    \centering
    \includegraphics[width=\columnwidth]{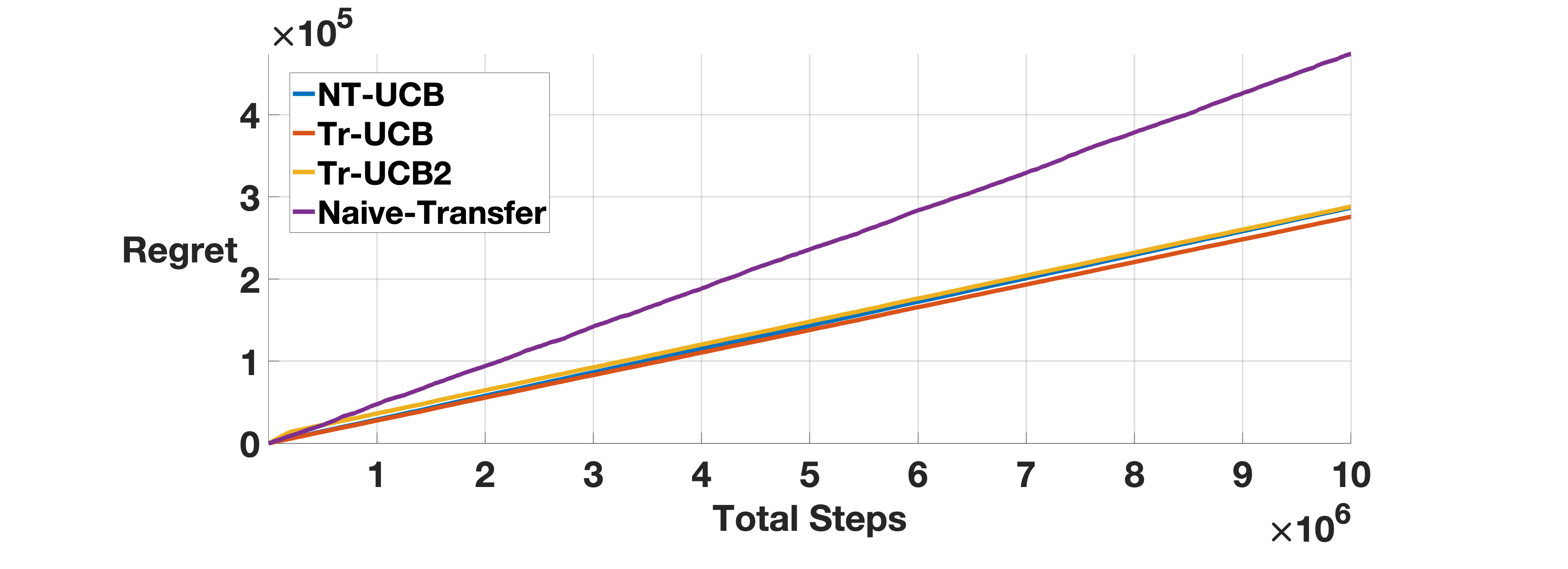}
    \caption{$\epsilon_k = 0.4$}
    \label{fig:6}
\end{subfigure}
\caption{Empirical Regret Vs Total Steps of  NT-UCB, Tr-UCB, Tr-UCB2 and Naive-Transfer algorithms for different values of $\epsilon_k$}
        \label{fig:all_2}
\end{figure}

Next, we analyze the results in Figure \ref{fig:all_1} and \ref{fig:all_2}. For each value of $\epsilon_k$, the regret of the Tr-UCB algorithm is the lowest among all algorithms, which means transferring the reward samples with the knowledge of $\epsilon_k$ helps reduce the regret significantly. Next, observe that Tr-UCB2 incurs more regret than Tr-UCB since $\epsilon_k$ is not known. Also, both Tr-UCB and Tr-UCB2 have significantly lower regret than NT-UCB, especially when $\epsilon_k$ values are small and lie below NT-UCB overall $\epsilon_k$ values considered (no negative transfer). This implies transferring reward samples from preceding tasks is beneficial. On the other hand, Naive-Transfer performs well when $\epsilon_k$ values are small, with increasing $\epsilon_k$ values, the performance starts degrading and performs worse than NT-UCB for larger values of $\epsilon_k$. The reason for this behavior is that transferring a large number of reward samples naively introduces a large bias in the mean estimates of the reward in the current task. Further, Tr-UCB2 incurs high regret in the initial tasks because of uniform sampling of the arms and inaccurate estimates of $\epsilon_k$. However, it performs better as more tasks are encountered.

\section{CONCLUSIONS}

We considered a sequential multi-task setting, where each task is a stochastic multi-armed bandit. We introduced the parameter $\epsilon_k$ to capture the adjacent similarity between tasks. We analyzed the transfer of reward samples and proposed two transfer algorithms based on UCB; one assumes the knowledge of $\epsilon_k$, and the other estimates this parameter from data. We provided a regret analysis of the algorithms and validated our approach via numerical experiments. One of the future research directions is to address the gap between the performance of Tr-UCB and Tr-UCB2. Other research directions include computation of lower bound on the regret, and studying transfer learning in the context of varying task similarity parameters, non-stationary bandits and reinforcement learning. 




\section*{APPENDIX}
To keep the notation simple, we re-denote several variables as, $\mu = \mu_k^j$, $\mu_* = \mu_*^j$, $\hat{\mu}_{1} = \hat{\mu}_{1k}^{j}(t-1)$, $\hat{\mu}_{1*} = \hat{\mu}_{1k_{*}^{j}}^{j}(t-1)$, $\hat{\mu}_{2} = \hat{\mu}_{2k}(t-1)$, $\hat{\mu}_{2*} = \hat{\mu}_{2k_{*}^{j}}(t-1)$, $\hat{M}_k^j = \min \{N_k^{j-1}(n_{j-1}),\hat{B}_k\}$
\small\begin{align*}
&q_{1} = \sqrt{\frac{\alpha\log{(t-1)}}{2N_k^{j}(t-1)}}, \:\:q_{1*} = \sqrt{\frac{\alpha\log{(t-1)}}{2N_{k_*^j}^{j}(t-1)}},\\
&q_{2} = \sqrt{\frac{\eta\log{(B_k+t-1)}}{2(N_{k}^j(t)+M_k^j)}},\:\:q_{2*} = \sqrt{\frac{\eta\log{(B_k+t-1)}}{2(N_{k_*^j}^j(t)+M_k^j)}},\\
&\hat{q}_{2} = \sqrt{\frac{\eta\log{(\hat{B}_k+t-1)}}{2(N_{k}^j(t)+\hat{M}_k^j)}},\:\:
\hat{q}_{2*} = \sqrt{\frac{\eta\log{(\hat{B}_k+t-1)}}{2(N_{k_*^j}^j(t)+\hat{M}_k^j)}}.
\end{align*}

\subsection{Proof of Theorem \ref{theorem1_trucb}}
\label{proof:theorem_2}
For arm $k$ to be pulled at time $t$ ($I_t = k$), at least one of the following five conditions needs to be true:
\begin{align}
\hat{\mu}_{1} - q_1 &> \mu, \label{eq:cond1} \\
\hat{\mu}_{1*} + q_{1*} &\leq \mu_{*},
\label{eq:cond2}\\
\hat{\mu}_{2*} + q_{2*} &\leq \mu_{*}, \label{eq:cond3}\\
\hat{\mu}_{2} - q_2 & > \mu, \label{eq:cond4}
\end{align}
\vspace{-0.8cm}
\begin{align}
N_k^{j}(t-1)  < u_{1k}^{j}\hspace{0.1cm}\text{and}\hspace{0.1cm}\
N_k^{j}(t-1)+M_k^j<  u_{2k}^{j}. \label{eq:cond5_2}
\end{align}

We prove the above claim by contradiction. Assume that the first condition in \eqref{eq:cond5_2} is false and none of the conditions in \eqref{eq:cond1}-\eqref{eq:cond4} is true. Then, using $q_1< \sqrt{\frac{\alpha\log{n_j}}{2N_k^{j}(t-1)}} $, we get
{\begin{align} 
\hat{\mu}_{1*} + q_{1*} &> \mu_{*} =\Delta_k^j + \mu
\geq 2q_{1}+\mu \geq\hat{\mu}_{1} + q_{1}, \label{eq:falseCond1} \\
\hat{\mu}_{2*} + q_{2*}&> \mu_{*} 
=\Delta_k^j + \mu \geq 2q_{1}+\mu
\geq\hat{\mu}_{1} + q_{1}. \label{eq:falseCond2}
\end{align}}
Conditions in \eqref{eq:falseCond1} and \eqref{eq:falseCond2} imply
{\begin{align} 
\label{eq:falseMainCond1}
\min\{\hat{\mu}_{1*} + q_{1*},\hat{\mu}_{2*} + q_{2*}\}
>\hat{\mu}_{1} + q_{1}.
\end{align}}
Similarly, when the second condition in \eqref{eq:cond5_2} is false and none of the conditions in \eqref{eq:cond1}-\eqref{eq:cond4} is true and, we get
{\begin{align} 
\label{eq:falseMainCond2}
\min\{\hat{\mu}_{1*} + q_{1*},\hat{\mu}_{2*} + q_{2*}\}
>\hat{\mu}_{2} + q_{2}.
\end{align}}
Thus, at least one of \eqref{eq:falseMainCond1} and \eqref{eq:falseMainCond2} is true, and this gives
{\begin{align*}
\min\{\hat{\mu}_{1*} + q_{1*},&\hat{\mu}_{2*} + q_{2*}\}
>\min\{\hat{\mu}_{1} + q_{1},\hat{\mu}_{2} + q_{2}\}.
\end{align*}}
The above condition means that the Tr-UCB algorithm will not pull arm $k$, and hence, there is a contradiction. The total regret over total tasks $J$ is given by
{\begin{align}
\label{eq:regretExp_2}
R_J = \sum\limits_{j=1}^{J}\sum\limits_{k=1}^K \Delta_{k}^{j}\mathbb{E}[N_k^j (n_j)],\leq \sum\limits_{k=1}^{K}\Delta_k^{\max}\mathbb{E}[D_k (J)],
\end{align}}
where $D_k(J)$ is the total number of sub-optimal pulls to arm $k$ over all tasks.
Therefore, we bound the regret by bounding the term $\mathbb{E}[D_k(J)]$. Consider an arbitrary sequence of decisions $I_t^j$, $t=1,2,\cdots,n_j$, $\forall j\in[J]$, we get 
\vspace{-0.1cm}
{\begin{align}
\label{eq:S_k}
&D_k(J)=\sum\limits_{j=1}^{J}\sum\limits_{t=1}^{n_j}\mathds{1}\{ I_t^j = k,k\neq k_{*}^j\}\nonumber\\
&=\sum\limits_{j=1}^{J}\bigg(\mathds{1}\{k\neq k_{*}^j\}+\sum\limits_{t=K+1}^{n_j}\mathds{1}\{ I_t^j = k,k\neq k_{*}^j\}\bigg),\nonumber\\
&=\sum\limits_{j=1}^{J}\sum\limits_{t=K+1}^{n_j}\mathds{1}\{ I_t^j = k,k\neq k_{*}^j;\eqref{eq:cond5_2}\hspace{0.1cm} \text{is}\hspace{0.1cm} \text{True}\}+\nonumber\\
&\hspace{0.1cm}\bigg(\mathds{1}\{k\neq k_{*}^j\}+\sum\limits_{t=K+1}^{n_j}\mathds{1}\{ I_t^j = k,k\neq k_{*}^j;\eqref{eq:cond5_2}\hspace{0.1cm} \text{is}\hspace{0.1cm} \text{False}\}\bigg)
\end{align}}
\vspace{-0.3cm}
{\begin{align*}
\label{eq:S_k_first}
&\hspace{-0.8cm}\text{First term in \eqref{eq:S_k}} = \sum\limits_{j=1}^{J}\sum\limits_{t=K+1}^{n_j} \mathds{1}\{ I_t^j = k, k \neq k_*^{j},\\
&\hspace{0.5cm}N_k^{j}(t-1)<u_{1k}^{j},(N_k^{j}(t-1)+M_k^j)<u_{2k}^{j}\}
\end{align*}}
\vspace{-0.5cm}
{\begin{align*}
&\hspace{-0.5cm}= \sum\limits_{j=1}^{J}\sum\limits_{t=K+1}^{n_j} \min\bigg\{\mathds{1}\{ I_t^j = k, k \neq k_*^{t},N_k^{j}(t-1)<u_{1k}^{j}\},\\
&\hspace{0.5cm}\mathds{1}\{ I_t^j = k, k \neq k_*^{t},
(N_k^{j}(t-1)+M_k^j)<u_{2k}^{j}\}\bigg\},
\end{align*}}
\vspace{-0.25cm}
{\begin{align*}
&\hspace{0.1cm}\leq \sum\limits_{l=0}^{\lceil\frac{J-2}{2}\rceil}\min\Bigg\{\sum\limits_{j=\{2l+1,2l+2\}}\sum\limits_{t=K+1}^{n_j} \mathds{1}\{ I_t^j = k, k \neq k_*^{t},\\
&\hspace{0.5cm}N_k^{j}(t-1)<u_{1k}^{j}\},\sum\limits_{j=\{2l+1,2l+2\}}\sum\limits_{t=K+1}^{n_j}\mathds{1}\{ I_t^j = k, k \neq k_*^{t},\\
&\hspace{0.5cm}(N_k^{j}(t-1)+M_k^j)<u_{2k}^{j}\}\Bigg\}+(\mathds{1}\{J \: \text{is odd}\})\\
&\hspace{0.5cm}\Bigg(\min\{\sum\limits_{t=K+1}^{n_j} \mathds{1}\{ I_t^J = k, k \neq k_*^{t},N_k^{J}(t-1)<u_{1k}^{J}\},\\
&\hspace{0.5cm}\sum\limits_{t=K+1}^{n_j} \mathds{1}\{ I_t^J = k, k \neq k_*^{t},N_k^{J}(t-1)+M_k^J<u_{2k}^{J}\}\}\Bigg)
\end{align*}}
\vspace{-0.5cm}
{\begin{align*}
&\hspace{0.2cm}\leq \sum\limits_{l=0}^{\lceil\frac{J-2}{2}\rceil}\min\Bigg\{\sum\limits_{\substack{j=\{2l+1,2l+2\} \\
\Delta_{k}^{j}> 0}}\hspace{-0.2cm}u_{1k}^j,\sum\limits_{j=\{2l+1,2l+2\}}\sum\limits_{t=K+1}^{n_j}\hspace{-0.1cm}\mathds{1}\{ I_t^j = k,\\
&\hspace{0.5cm}k \neq k_*^{t},(N_k^{j}(t-1)+\hat{M}_k^j)<\max\{u_{2k}^{2l+1},u_{2k}^{2l+2}\}\}\Bigg\}+\\
&\hspace{0.5cm}\mathds{1}\{J \: \text{is odd},\Delta_k^j>0\}\Bigg(\min\bigg\{u_{1k}^J, u_{2k}^J\bigg\}\Bigg)
\end{align*}}
where $\hat{M}_k^j = M_k^j\mathds{1}\{j\: \text{is even}\}$,
{\begin{align}
&\leq \sum\limits_{l=0}^{\lceil\frac{J-2}{2}\rceil}\min\Bigg\{\underbrace{\sum\limits_{\substack{j=\{2l+1,2l+2\} \\
\Delta_{k}^{j}> 0}}u_{1k}^j}_{\triangleq U_k^l},\nonumber\\
&\hspace{0.5cm}\underbrace{\Bigg(\sum\limits_{\substack{j=\{2l+1,2l+2\} \\
\Delta_{k}^{j}> 0}}u_{2k}^j\Bigg)-\min\{\max\{u_{2k}^{2l+1},u_{2k}^{2l+2}\},B_k\}}_{\triangleq V_k^l}\Bigg\}+\nonumber\\
&\hspace{0.5cm}\underbrace{\mathds{1}\{J \: \text{is odd},\Delta_k^j>0\}\Bigg(\min\bigg\{u_{1k}^J, u_{2k}^J\bigg\}\Bigg)}_{\triangleq W_k^J}
\end{align}}
\vspace{-0.5cm}
{\begin{align}
\label{eq:S_k_second}
&\text{Second term of \eqref{eq:S_k}}\leq \sum\limits_{j=1}^{J}\bigg(1+\sum\limits_{t=K+1}^{n_j} \mathds{1}\{\eqref{eq:cond1}\hspace{0.1cm} \text{or}\hspace{0.1cm} \eqref{eq:cond2}\text{or}\hspace{0.1cm}\eqref{eq:cond3}\nonumber\\
&\hspace{3.5cm}\text{or}\hspace{0.1cm} \eqref{eq:cond4}\hspace{0.1cm} \text{is}\hspace{0.1cm} \text{True}\}\bigg).
\end{align}}
Using \eqref{eq:S_k}, \eqref{eq:S_k_first}, \eqref{eq:S_k_second} and taking expectation, we obtain
{\begin{align}
\label{eq:S_k_f+S2}
\mathbb{E}[D_k(J)]&\leq \sum\limits_{l=0}^{\lceil\frac{J-2}{2}\rceil}\min\left\{U_{k}^l,V_{k}^l\right\}+W_k^J+ \sum\limits_{j=1}^{J}\bigg(1+\nonumber\\
&\hspace{0.5cm}\sum\limits_{t=K+1}^{n_j} \text{Pr}\{\eqref{eq:cond1}\hspace{0.1cm}\text{or} \eqref{eq:cond2}\hspace{0.1cm} \text{or}\hspace{0.1cm} \eqref{eq:cond3}\hspace{0.1cm} \text{or}\hspace{0.1cm} \eqref{eq:cond4}\hspace{0.1cm} \text{is}\hspace{0.1cm} \text{True}\}\bigg),\\
\nonumber
\end{align}}
Next, we bound the probability of the event that at least one of \eqref{eq:cond1} or \eqref{eq:cond2} or \eqref{eq:cond3} or \eqref{eq:cond4} is true. We use the union bound, followed by the application of one-sided Hoeffding's inequality \cite{hoeffding1994probability} to get,
{\begin{align}
\label{eq:probBound2}
&\text{Pr}\{\eqref{eq:cond1}\hspace{0.1cm}\text{or}\hspace{0.1cm} \eqref{eq:cond2}\hspace{0.1cm} \text{or}
\hspace{0.1cm} \eqref{eq:cond3}\hspace{0.1cm} \text{or}\hspace{0.1cm}  \eqref{eq:cond4}\hspace{0.1cm} \text{is}\hspace{0.1cm} \text{True}\}\nonumber\\
&\leq \text{Pr}\{\eqref{eq:cond1}\hspace{0.1cm} \text{is}\hspace{0.1cm} \text{True}\}+\text{Pr}\{\eqref{eq:cond2}\hspace{0.1cm} \text{is}\hspace{0.1cm} \text{True}\}+\text{Pr}\{\eqref{eq:cond3}\hspace{0.1cm} \text{is}\hspace{0.1cm} \text{True}\}\nonumber\\
&\hspace{1cm}+\text{Pr}\{\eqref{eq:cond4}\hspace{0.1cm} \text{is}\hspace{0.1cm} \text{True}\}\}\nonumber\\
&\leq \frac{2}{t^{\alpha-1}}+2\text{Pr}\{\eqref{eq:cond3}\hspace{0.1cm} \text{is}\hspace{0.1cm} \text{True}\}
\end{align}}
\begin{align*}
\text{Pr}\{\eqref{eq:cond3}\hspace{0.1cm} \text{is}\hspace{0.1cm} \text{True}\}&= \text{Pr}\{\hat{\mu}_{2} - q_2  > \mu\}\\
&=\text{Pr}\bigg\{\hat{\mu}_{2} - \mu  > \sqrt{\frac{\eta\log{(B_k+t-1)}}{2(N_{k}^j(t)+M_k^j)}}\bigg\}
\end{align*}
\begin{align}
\label{proof_bias}
=\text{Pr}\bigg\{\hat{\mu}_{2} - \mathbb{E}[\hat{\mu}_{2}] > \sqrt{\frac{\eta\log{(B_k+t-1)}}{2(N_{k}^j(t)+M_k^j)}}-\frac{M_k^j\epsilon_k}{(N_{k}^j(t)+M_k^j)}\bigg\}
\end{align}
We want the bias term to not exceed more than half of the confidence width, 
\begin{align*}
\frac{M_k^j\epsilon_k}{(N_{k}^j(t)+M_k^j)}&\leq \frac{1}{2}\sqrt{\frac{\eta\log{(B_k+t-1)}}{2(N_{k}^j(t)+M_k^j)}}\\
B_k&\leq \frac{\eta-4\epsilon_k^2}{4\epsilon_k^2},
\end{align*}
which is ensured by the algorithm.
Therefore, from \ref{proof_bias} we get, 
\begin{align}
\label{proof_probbound}
\text{Pr}\{\eqref{eq:cond3}\hspace{0.1cm} \text{is}\hspace{0.1cm} \text{True}\}&=\text{Pr}\bigg\{\hat{\mu}_{2} - \mathbb{E}[\hat{\mu}_{2}] > \sqrt{\frac{\eta\log{(B_k+t-1)}}{2(N_{k}^j(t)+M_k^j)}}-\nonumber\\
&\hspace{1cm}\frac{M_k^j\epsilon_k}{(N_{k}^j(t)+M_k^j)}\bigg\}\nonumber\\
&\leq \text{Pr}\bigg\{\hat{\mu}_{2} - \mathbb{E}[\hat{\mu}_{2}] > \frac{1}{2}\sqrt{\frac{\eta\log{(B_k+t-1)}}{2(N_{k}^j(t)+M_k^j)}}\bigg\}\nonumber\\
&\leq \frac{2}{(B_k+t)^{\frac{\eta}{4}-1}},
\end{align}
where the last inequality follows from Hoeffding's inequality.
Using \eqref{eq:S_k_f+S2}, \eqref{eq:probBound2} and \eqref{proof_probbound}, we obtain
{\begin{align*}
&\mathbb{E}[D_k(J)]\leq \sum\limits_{l=0}^{\lceil\frac{J-2}{2}\rceil}\min\left\{U_{k}^l,V_{k}^l\right\}+W_k^J
+\sum\limits_{j=1}^{J}\bigg(1+\\
&\hspace{2cm}\sum\limits_{t=K+1}^{n_j} \frac{2}{t^{\alpha-1}}+\frac{2}{(t+B_k)^{\alpha-1}}\bigg)
\end{align*}}
{\begin{align*}
&\hspace{0cm}\leq \sum\limits_{l=0}^{\lceil\frac{J-2}{2}\rceil}\min\left\{U_{k}^l,V_{k}^l\right\}+W_k^J
+\sum\limits_{j=1}^{J}\bigg(1+\int\limits_{s=K}^{\infty} \bigg(\frac{2}{s^{\alpha-1}}+\\
&\hspace{2cm}\frac{2}{(s+B_k)^{\alpha-1}}\bigg)ds\bigg)
\end{align*}}
\vspace{-0.5cm}
{\begin{align*}
&\hspace{-0.6cm}= \sum\limits_{l=0}^{\lceil\frac{J-2}{2}\rceil}\min\left\{U_{k}^l,V_{k}^l\right\}+W_k^J+ \sum\limits_{j=1}^{J}\bigg(1+\frac{2K^{2-\alpha}}{\alpha-2}+\\
&\hspace{2cm}\frac{2(K+B_k)^{2-\frac{\eta}{4}}}{\frac{\eta}{4}-2}\bigg)
\end{align*}}
\vspace{-0.5cm}
{\begin{align*}
&\hspace{0.4cm}\leq \sum\limits_{l=0}^{\lceil\frac{J-2}{2}\rceil}\min\left\{U_{k}^l,V_{k}^l\right\}+W_k^J+ \sum\limits_{j=1}^{J}\bigg(1+\frac{2}{\alpha-2}+\frac{2}{\frac{\eta}{4}-2}\bigg)
\end{align*}}
\vspace{-0.5cm}
{\begin{align*}
&\hspace{-0.3cm}\leq \sum\limits_{l=0}^{\lceil\frac{J-2}{2}\rceil}\min\left\{U_{k}^l,V_{k}^l\right\}+W_k^J+ J\bigg(\frac{\alpha}{\alpha-2}+\frac{8}{\eta-8}\bigg),
\end{align*}}
and therefore \eqref{eq:regretExp_2} follows.

\subsection{Proof of Theorem \ref{theorem2_trucb}}
\label{proof:theorem_3}
Let $\hat{B}_k$ and $\hat{u}_{2k}^{j}$ be defined as follows,
\begin{align}
\label{eq_B_hat_u2_hat}
\hat{B}_k \triangleq \frac{\eta-4\hat{\epsilon}_k^2}{4\hat{\epsilon}_k^2},\:\:\text{and}\:\: 
\hat{u}_{2k}^{j} \triangleq \frac{2\eta\log{(\hat{B}_k+n_j)}}{(\Delta_k^{j})^2}.
\end{align}
For arm $k$ to be pulled at time $t$ in task $j$ (i.e. $I_t^j = k$), at least one of the following five conditions should be true:
\begin{align}
\hat{\mu}_{1} - q_1 &> \mu \label{eq:cond1_2} \\
\hat{\mu}_{1*} + q_{1*} &\leq \mu_{*}
\label{eq:cond2_2}\\
\hat{\mu}_{2*} + \hat{q}_{2*} &\leq \mu_{*} \label{eq:cond3_2}\\
\hat{\mu}_{2} - \hat{q}_2 & > \mu \label{eq:cond4_2}
\end{align}
\vspace{-0.8cm}
\begin{align}
N_k^{j}(t-1)  < u_{1k}^{j}\hspace{0.1cm}\text{and}\hspace{0.1cm}\
N_k^{j}(t-1)+\hat{M}_k^j <  \hat{u}_{2k}^{j} \label{eq:cond5_1}
\end{align}
This is shown by contradiction. The steps are similar as in the proof of Theorem \ref{proof:theorem_2} and therefore is omitted here. The total regret after total tasks $J$ is given by
{\begin{align}
R_J &= \sum\limits_{j=1}^{J}\sum\limits_{k=1}^K \Delta_{k}^{j}\mathbb{E}[N_k^j (n_j)]=\sum\limits_{k=1}^{K}\Delta_k^{\max}\mathbb{E}[D_k (J)],
\end{align}}
where $D_k(J)$ is the total number of sub-optimal pulls to arm $k$ over all tasks. Next, we bound the regret by bounding the term $\mathbb{E}[D_k(J)]$. For an arbitrary sequence $I_t^j$, $t=1,2,\cdots,n_j$, $\forall j\in[J]$, we have 
{\begin{align}
\label{eq:S_k2}
&D_k(J)=\sum\limits_{j=1}^{J}\sum\limits_{t=1}^{n_j}\mathds{1}\{ I_t^j = k,k\neq k_{*}^j\},\nonumber\\
&=\sum\limits_{j=1}^{L}\bigg(\frac{l}{K}\mathds{1}\{k\neq k_{*}^j\}+\sum\limits_{t=l+1}^{n_j}\mathds{1}\{ I_t^j = k,k\neq k_{*}^j\}\bigg)+\nonumber\\
&\hspace{1cm}\sum\limits_{j=L+1}^{J}\bigg(\mathds{1}\{k\neq k_{*}^j\}+\sum\limits_{t=K+1}^{n_j}\mathds{1}\{ I_t^j = k,k\neq k_{*}^j\}\bigg),\nonumber\\
&\leq \frac{l L}{K}+\sum\limits_{j=1}^{L}\sum\limits_{t=l+1}^{n_j}\mathds{1}\{ I_t^j = k,k\neq k_{*}^j\}+\sum\limits_{j=L+1}^{J}\bigg(\mathds{1}\{k\neq k_{*}^j\}+\nonumber\\
&\hspace{1cm}\sum\limits_{t=K+1}^{n_j}\mathds{1}\{ I_t^j = k,k\neq k_{*}^j\}\bigg).
\end{align}}
We omit the steps showing the further simplification of \eqref{eq:S_k2} to obtain an upper bound on the expected number of sub-optimal pulls since it is similar to the proof of Theorem \ref{proof:theorem_2}. Therefore, we get 
{\begin{align*}
\mathbb{E}[D_k(J)]&\leq \frac{l L}{K}+\sum\limits_{l=0}^{\lceil\frac{J-2}{2}\rceil}\min\left\{U_{k}^l,\hat{V}_{k}^l\right\}+\hat{W}_k^J+ \sum\limits_{j=1}^{J}\bigg(1+\\
&\hspace{0.5cm}\sum\limits_{t=K+1}^{n_j} \text{Pr}\{\eqref{eq:cond1_2}\hspace{0.1cm}\text{or}\hspace{0.1cm} \eqref{eq:cond2_2}
\hspace{0.1cm}\text{or}\hspace{0.1cm} \eqref{eq:cond3_2}\hspace{0.1cm} \text{or}\hspace{0.1cm} \eqref{eq:cond4_2}\hspace{0.1cm} \text{is}\hspace{0.1cm} \text{True}\}\bigg),
\end{align*}}
where,
{\begin{align*}
U_{k}^l = u_{1k}^{2l+1}\mathds{1}\{\Delta_{k}^{2l+1}> 0\}+u_{1k}^{2l+2}\mathds{1}\{\Delta_{k}^{2l+2}> 0\},
\end{align*}}
{\begin{align*}
\hat{V}_{k}^l=&\Bigg(\hat{u}_{2k}^{2l+1}\mathds{1}\{\Delta_{k}^{2l+1}> 0\}+\hat{u}_{2k}^{2l+2}\mathds{1}\{\Delta_{k}^{2l+2}> 0\}\Bigg)\\
&\hspace{0.5cm}-\min\Bigg\{\max\bigg\{\hat{u}_{2k}^{2l+1},\hat{u}_{2k}^{2l+2}\bigg\},\hat{B}_k\Bigg\}
\end{align*}}
{\begin{align*}
\hspace{0cm}\text{and}\:\:\hat{W}_{k}^J= \mathds{1}\{J \: \text{is odd},\Delta_k^J>0\}\Bigg(\min\bigg\{u_{1k}^J, \hat{u}_{2k}^J\bigg\}\Bigg).
\end{align*}}
Further, using $\min\left\{U_{k}^l,\hat{V}_{k}^l\right\}\leq U_{k}^l$ and $\min\bigg\{u_{1k}^J, \hat{u}_{2k}^J\bigg\}\leq u_{1k}^J$, we get
{\begin{align*}
\mathbb{E}[D_k(J)]&\leq \frac{l L}{K}+\sum\limits_{l=0}^{\lceil\frac{J-2}{2}\rceil}U_{k}^l+u_{1k}^J\mathds{1}\{J \: \text{is odd},\Delta_k^J>0\}\\
&\hspace{0.5cm}+ \sum\limits_{j=1}^{J}\bigg(1+\sum\limits_{t=K+1}^{n_j} \text{Pr}\{\eqref{eq:cond1_2}\hspace{0.1cm}\text{or}\hspace{0.1cm} \eqref{eq:cond2_2}\hspace{0.1cm}\text{or}\hspace{0.1cm} \eqref{eq:cond3_2}\hspace{0.1cm} \\
&\hspace{0.8cm}\text{or}\hspace{0.1cm} \eqref{eq:cond4_2}\hspace{0.1cm} \text{is}\hspace{0.1cm} \text{True}\}\bigg).
\end{align*}}
{\begin{align}
&= \frac{l L}{K}+\sum\limits_{j=1}^{J}u_{1k}^j+ \sum\limits_{j=1}^{J}\bigg(1+\sum\limits_{t=K+1}^{n_j} \text{Pr}\{\eqref{eq:cond1_2}\hspace{0.1cm}\text{or}\hspace{0.1cm} \eqref{eq:cond2_2}\hspace{0.1cm}\text{or}\hspace{0.1cm} \eqref{eq:cond3_2}\nonumber\\
\label{eq:S_k_f+S_B}
&\hspace{0.5cm}\text{or}\hspace{0.1cm} \eqref{eq:cond4_2}\hspace{0.1cm} \text{is}\hspace{0.1cm} \text{True}\}\bigg),
\end{align}}

Next, we bound the probability of the event that at least one of \eqref{eq:cond1_2} or \eqref{eq:cond2_2} or \eqref{eq:cond3_2} or \eqref{eq:cond4_2} is true. We use the union bound, followed by the application of one-sided Hoeffding's inequality \cite{hoeffding1994probability} to obtain,
{\begin{align}
\label{eq:probBound}
&\text{Pr}\{\eqref{eq:cond1_2}\hspace{0.1cm}\text{or}\hspace{0.1cm} \eqref{eq:cond2_2}\hspace{0.1cm} \text{or}
\hspace{0.1cm} \eqref{eq:cond3_2}\hspace{0.1cm} \text{or}\hspace{0.1cm}  \eqref{eq:cond4_2}\hspace{0.1cm} \text{is}\hspace{0.1cm} \text{True}\}\nonumber\\
&\leq \text{Pr}\{\eqref{eq:cond1_2}\hspace{0.1cm} \text{is}\hspace{0.1cm} \text{True}\}+\text{Pr}\{\eqref{eq:cond2_2}\hspace{0.1cm} \text{is}\hspace{0.1cm} \text{True}\}+\text{Pr}\{\eqref{eq:cond3_2}\hspace{0.1cm} \text{is}\hspace{0.1cm} \text{True}\}+\nonumber\\
&\hspace{1cm}\text{Pr}\{\eqref{eq:cond4_2}\hspace{0.1cm} \text{is}\hspace{0.1cm} \text{True}\}\},\nonumber\\
&\leq \frac{2}{t^{\alpha-1}}+\text{Pr}\{\eqref{eq:cond3_2}\hspace{0.1cm} \text{is}\hspace{0.1cm} \text{True}\}+\text{Pr}\{\eqref{eq:cond4_2}\hspace{0.1cm} \text{is}\hspace{0.1cm} \text{True}\}\nonumber\\
&\leq \frac{2}{t^{\alpha-1}}+\text{Pr}\{\hat{\mu}_{2*} + q_{2*} \leq \mu_{*}\}+ \text{Pr}\{ \hat{\epsilon}_k^j<\epsilon_k\}+\nonumber\\
&\hspace{1cm}\text{Pr}\{\hat{\mu}_{2} - q_2  > \mu\}+ \text{Pr}\{ \hat{\epsilon}_k^j<\epsilon_k\}\nonumber\\
&\leq \frac{2}{t^{\alpha-1}}+2\bigg(\frac{1}{(t+B_k)^{\frac{\eta}{4}-1}}+\text{Pr}\{ \hat{\epsilon}_k^j<\epsilon_k\}\bigg)\nonumber\\
&\leq \frac{2}{t^{\alpha-1}}+2\bigg(\frac{1}{(t+B_k)^{\frac{\eta}{4}-1}}+j\delta\bigg).
\end{align}}
Using \eqref{eq:S_k_f+S_B} and \eqref{eq:probBound}, we get
{\begin{align*}
&\mathbb{E}[D_k(J)]\leq \frac{l L}{K}+\sum\limits_{j=1}^{J}u_{1k}^j
+\nonumber\\
&\hspace{2cm}\sum\limits_{j=1}^{J}\bigg(1+\sum\limits_{t=K+1}^{n_j} \frac{2}{t^{\alpha-1}}+\frac{2}{(t+B_k)^{\frac{\eta}{4}-1}}+ j\delta\bigg),\\
&\leq \frac{l L}{K}+\sum\limits_{j=1}^{J}u_{1k}^j+J\bigg(\frac{\alpha}{\alpha-2}+\frac{8}{\eta-8}\bigg)+TJ\delta,
\end{align*}}
and therefore \eqref{eq:trucbRegret2} follows.

For the next part, lets consider again \eqref{eq:trucbRegret2},
{\begin{align*}
R_T&\leq \frac{l L}{K}+\sum\limits_{j=1}^{J}u_{1k}^j+ J\bigg(\frac{\alpha}{\alpha-2}+\frac{8}{\eta-8}\bigg)+TJ\delta\\
&=\mathcal{O}(T^\beta\log(T))+\mathcal{O}(T^\beta)+\mathcal{O}(T^\beta)\\
&=\mathcal{O}(T^\beta\log(T))
\end{align*}}
Hence the theorem follows.


\bibliographystyle{ieeetr}
\bibliography{reference}

\end{document}